\newcommand\blfootnote[1]{%
  \begingroup
  \renewcommand\thefootnote{}\footnote{#1}%
  \addtocounter{footnote}{-1}%
  \endgroup
}
\newcommand{\mal}{\wt}
\newcommand{\dist}{\mathbf{d}}
\newcommand{\C}{\cC}
\newcommand{\D}{\cD}
\renewcommand{\H}{\cH}
\newcommand{\error}{\mathsf{Err}}
\newcommand{\alphaVar}{{\bm \upalpha}}
\newcommand{\Risk}{\mathsf{Risk}}
\newcommand{\Tam}{\mathsf{T}}
\newcommand{\High}{\mathrm{High}}
\newcommand{\Low}{\mathrm{Low}}
\newcommand{\BigOut}{\mathrm{Big}}
\newcommand{\SmallOut}{\mathrm{Small}}
\newcommand{\xVar}{\mathbf{x}}
\newcommand{\xDist}{\xVar}
\newcommand{\yVar}{\mathbf{y}}
\newcommand{\yDist}{\yVar}
\newcommand{\zVar}{\mathbf{z}}
\newcommand{\xVecVar}{\ol{\xVar}}
\newcommand{\yVecVar}{\ol{\yVar}}
\newcommand{\zVecVar}{\ol{\zVar}}
\newcommand{\xVec}{\ol{x}}
\newcommand{\yVec}{\ol{y}}
\newcommand{\SVar}{\mathbf{S}}
\renewcommand{\th}{^\mathrm{th}}
\newcommand{\twist}[3]{\langle #1 \,\, \| \, {#2}, {#3}\rangle}
\newcommand{\VP}{\mathrm{ValPref}}
\newcommand{\pfix}[2]{ #1_{\leq #2}}
\newcommand{\fhat}[2]{\ifthenelse{\equal{#2}{}}{\hat{f}(#1)}{\ifthenelse{\equal{#2}{0}}{\hat{f}(\emptyset)}{\hat{f}(#1_{\leq #2})}}}
\newcommand{\gain}[2]{\ifthenelse{\equal{#2}{}}{g[#1]}{g[#1_{\leq #2}]}}
\newcommand{\RTam}{\mathsf{RejSam}}
\newcommand{\RTamk}{\mathsf{RejSam}_{k}}
\newcommand{\pr}[2][]{\Pr_{\ifthenelse{\isempty{#1}}{}{{#1}}}\left[{#2}\right]}
\newcommand{\Tamp}{\Tam}
\newcommand{\Samp}{\mathsf{OnSam}}
\newcommand{\sm}{\setminus}
\newcommand{\Rplus}{\R_+}
\newcommand{\Loss}{\mathsf{Loss}}
\newcommand{\E}{\mathrm{E}}
\renewcommand{\D}{\cD}
\newcommand{\remove}[1]{}
\newcommand{\ol}{\overline}
\newcommand{\wt}[1]{\widetilde{#1}}
\newcommand{\se}{\subseteq}
\newcommand{\set}[1]{\{ #1 \}}
\newcommand{\To}{\mapsto}
\newcommand{\R}{{\mathbb R}}
\newcommand{\N}{{\mathbb N}}
\newcommand{\Adv}{\mathrm{Adv}}
\newcommand{\cA}{{\mathcal A}}
\newcommand{\cB}{{\mathcal B}}
\newcommand{\cC}{{\mathcal C}}
\newcommand{\cD}{{\mathcal D}}
\newcommand{\cH}{{\mathcal H}}
\newcommand{\cP}{{\mathcal P}}
\newcommand{\cT}{{\mathcal T}}
\newcommand{\cZ}{{\mathcal Z}}
\newcommand{\bfx}{\mathbf{x}}
\newcommand{\eps}{\varepsilon}
\newcommand{\vphi}{\varphi}
\newcommand{\poly}{\operatorname{poly}}
\newcommand{\Exp}{\operatorname*{\mathbb{E}}}
\newcommand{\Ex}{\Exp}
\newcommand{\Var}{\operatorname*{\mathbb{V}}}
\newcommand{\Supp}{\operatorname{Supp}}
\newtheorem{theorem}{Theorem}[section]
\newtheorem{claim}[theorem]{Claim}
\newtheorem{lemma}[theorem]{Lemma}
\newtheorem{corollary}[theorem]{Corollary}
\newtheorem{definition}[theorem]{Definition}
\newtheorem{construction}[theorem]{Construction}
\newtheorem{remark}[theorem]{Remark}
\newcommand{\sdotfill}{\textcolor[rgb]{0.8,0.8,0.8}{\dotfill}} %change to \cdotfill later
\def\th@protocol{%
    \normalfont % body font
    \setbeamercolor{block title example}{bg=orange,fg=white}
    \setbeamercolor{block body example}{bg=orange!20,fg=black}
    \def\inserttheoremblockenv{exampleblock}
  }
\newtheorem{proto}[theorem]{Protocol}
\newtheorem{protoc}[theorem]{Protocol}
\newcommand{\namedref}[2]{#1~\ref{#2}}
\newcommand{\torestate}[3]{%
\expandafter \def \csname BBRESTATE #2 \endcsname{#3}
\newtheorem{BBRESTATETHMNUM#2}[theorem]{#1}
\begin{BBRESTATETHMNUM#2}\label{#2}\csname BBRESTATE #2 \endcsname   \end{BBRESTATETHMNUM#2}
\newtheorem*{BBRESTATETHMNONNUM#2}{\namedref{#1}{#2}}
}
\newcommand{\restate}[1]{\begin{BBRESTATETHMNONNUM#1}[Restated] \csname BBRESTATE #1 \endcsname
\end{BBRESTATETHMNONNUM#1}}
\title{Universal Multi-Party Poisoning Attacks
\blfootnote{This is a full version of a paper that was previously published in the Proceedings of the 36th International Conference on Machine
Learning, Long Beach, California, PMLR 97, 2019.}}
\author{Saeed Mahloujifar\thanks{\texttt{sfar@princeton.edu}} \and Mohammad Mahmoody\thanks{\texttt{mohammad@virginia.edu}} \and {Ameer Mohammed\thanks{\texttt{ameer.mohammed@ku.edu.kw}}}}
\begin{document}

\maketitle

% You may provide any keywords that you
% find helpful for describing your paper; these are used to populate
% the "keywords" metadata in the PDF but will not be shown in the document
% \icmlkeywords{Machine Learning, ICML}

\vskip 0.3in

% this must go after the closing bracket ] following \twocolumn[ ...

% This command actually creates the footnote in the first column
% listing the affiliations and the copyright notice.
% The command takes one argument, which is text to display at the start of the footnote.
% The \icmlEqualContribution command is standard text for equal contribution.
% Remove it (just {}) if you do not need this facility.

% \printAffiliationsAndNotice{}  % leave blank if no need to mention equal contribution
%\printAffiliationsAndNotice{\icmlEqualContribution} % otherwise use the standard text.

\begin{abstract}
In this work, we demonstrate universal multi-party poisoning attacks that  adapt and apply to any multi-party learning process with arbitrary interaction pattern between the parties. More generally, we introduce and study $(k,p)$-poisoning attacks in which an adversary controls $k\in[m]$ of the parties, and  for each corrupted party $P_i$, the adversary submits some poisoned data $\cT'_i$ on behalf of $P_i$ that is still  ``$(1-p)$-close'' to the correct data $\cT_i$ (e.g., $1-p$ fraction of $\cT'_i$ is still honestly generated). We prove that for any ``bad'' property $B$ of the final trained hypothesis $h$ (e.g., $h$ failing on a particular test example or having ``large'' risk) that has an arbitrarily small constant probability of happening without the attack, there  always is a $(k,p)$-poisoning attack that increases the probability of $B$ from $\mu$ to  by $\mu^{1-p \cdot k/m} = \mu + \Omega(p \cdot k/m)$. Our  attack only uses clean  labels, and it is online.

More generally, we prove that for any bounded function $f(x_1,\dots,x_n) \in [0,1]$ defined over an $n$-step random process $\bfx = (x_1,\dots,x_n)$,  an adversary who can override each of the $n$ blocks with \emph{even dependent} probability $p$  can  increase the expected output by at least $\Omega(p \cdot \mathrm{Var}[f(\xVecVar)])$.
\end{abstract}

\tableofcontents

\section{Introduction}
Learning from a  set $\cT=\set{d_1=(a_1,b_1),\dots,d_n=(a_n,b_n)}$ of training examples  in a way that the predictions generalize to instances beyond $\cT$ is a fundamental problem in learning theory. The goal here is to produce a hypothesis   $h$ in such a way that $h(a)$, with high probability, predicts the ``correct'' label $b$, where the pair $(a,b)=d$ is sampled from the target (test) distribution $\dist$. In the most natural setting, the examples in the training data set $\cT$ are also generated from the same distribution $\dist$, however this is not always the case (e.g., due to noise in the data). 
%Indeed, in his seminal work Valiant \citep{Valiant::DisjunctionsConjunctions} initiated a formal study of this phenomenon by defining the framework of learning under \emph{malicious noise}, that is a model in which an all-powerful adversary is allowed to change each of the generated training examples $d \in \cT$  with independent probability $p$, in an online way. Subsequently, it was shown \citep{KearnsLi::Malicious,NastyNoise} that PAC learning of even simple problems in this model could be impossible, at least for specific pathological distributions.

\paragraph{Poisoning attacks.} 
Many previous works studying noise in the data allow it to be \emph{adversarial} and \emph{maliciously} chosen against the learner \citep{Valiant::DisjunctionsConjunctions,KearnsLi::Malicious,NastyNoise}.  A tightly related and more recent approach to the problem of  learning under adversarial noise is the framework of  so-called \emph{poisoning} (aka \emph{causative}) attacks~\citep{barreno2006can,biggio2012poisoning,papernot2016towards}, in which the adversary's goal is not necessarily to completely prevent the learning, but perhaps it simply wants to  increase the risk of the hypothesis produced by the learning process or make it more likely to fail on a particular test instance (i.e., getting a \emph{targeted} poisoning attack \citep{barreno2006can,shen2016uror}).

%A poisoning attack could be defined also in settings that are not at all covered by the malicious noise model; for example a poisoning attacker  might even have a particular test example in mind while doing the whole attack, making the final attack a \emph{targeted} one \citep{shen2016uror}. 

%\cite{diakonikolas2016robust,lai2016agnostic}

%In our attacks, however, similarly to virtually all attacks in the literature (over specific learners and models) we demonstrate inherent power of poisoning attacks (that apply to \emph{any} learner and hypothesis class) to \emph{amplify} the error of classifiers starting from small and perhaps acceptable error rates, while after the attack the error probability is essentially  one. Namely, our results show that in order to resist poisoning attacks, the same algorithms should do much better in the no-attack setting, as otherwise a poisoning attacker can increase the targeted error probability significantly.

\paragraph{Multi-party poisoning.} In the distributed setting \citep{mcmahan2017federated,mcmahan2016communication,bonawitz2017practical,konevcny2016federated},  the training data $\cT$ might be coming from various sources; e.g., it can be generated by $m$ data providers $P_1,\dots,P_m$ in an online way, while at the end a fixed algorithm, called the aggregator $G$, generates the hypothesis $h$ based on $\cT$. The goal of $P_1,\dots,P_m$ is to eventually help $G$ construct a hypothesis $h$ that does well (e.g. in the case of classification) in predicting the label $b$ of a given instance $a$, where $(a,b) \gets \dist$ is sampled from the final test distribution. The data provided by each party $P_i$ might even be of ``different type'', so we cannot simply assume that the data provided by $P_i$ is necessarily sampled from the same distribution $\dist$. To model this more general setting, we let $\dist_i$ model the distribution from which the training data $\cT_i$ (of $P_i$) is sampled. Poisoning attacks can naturally be defined in the distributed setting  as well  \citep{fung2018mitigating,bagdasaryan2018backdoor,blanchard2017machine,hayes2018contamination} to model adversaries who partially control the training data $\cT$.
%with the goal of decreasing the quality of the generated hypothesis. 
These works, however, focus on attacking and defending specific learning tasks.
%The central question of our work is then as follows. 
This leads us to the central question of this work.
\begin{quote}
    \emph{What is the \emph{inherent} provable power of poisoning attacks in the multi-party setting?}
\end{quote}
Answering the above question is critical for understanding the \emph{limits} of provable security against multi-party poisoning.

\subsection{Our Contribution}
We first formalize a new general model multi-party poisoning. We then prove the existence of universal data poisoning attacks in the multi-party setting that apply to any task.

%In this work we formalize poisoning attackers that target multi-party protocols. In fact, our first contribution of this work is to formalize a general notion that covers multi-party poisoning attackers that corrupt $k$ out of $m$ data provider parties and furthermore, for each message sent by a corrupted party, the adversary still generates data that is ``close'' to the honestly generated data (this closeness is controlled by another parameter $p$). This closeness would make it harder for the aggregator to detect violation from the distribution.

\paragraph{New attack model: $(k,p)$-poisoning attacks.}  our first contribution of this work is to formalize a general notion that covers multi-party poisoning attackers that corrupt $k$ out of $m$ data provider parties and furthermore, for each message sent by a corrupted party, the adversary still generates data that is ``close'' to the honestly generated data. More formally, a $(k,p)$-poisoning attacker $\Adv$ can first choose to corrupt $k$ of the parties. Then, if a corrupted $\mal{P}_i$ is supposed to send the next message, then the adversary will sample $d \gets \mal{\dist}$ for a maliciously chosen distribution $\mal{\dist}$ that is guaranteed to be $p$ to the original distribution $\dist_i$ in total variation distance. Our $(k,p)$-poisoning attacks include the so called ``$p$-tampering'' attacks of \citep{Mahloujifar2018:ALT} as special case by letting $k=m$ ($m$ is the number of parties).  Moreover, $(k,p)$-attacks also include the standard model of $k$ static corruption in secure multi-party computation (in cryptography) letting $p=1$. Our main result in this works is to  prove the \emph{universal} power of $(k,p)$-poisoning as follow. We show that in \emph{any}  $m$-party learning protocol, there exist a $(k,p)$-poisoning adversary that increases probability of the produced hypothesis $h$ having a bad property $B$ (e.g., failing on a particular target instance known to the adversary). 
For the formal  version of Theorem~\ref{thm:mainMPP-Inf}, see Theorem~\ref{thm:mainMPP}.

\begin{theorem}[Power of $(k,p)$-poisoning attacks--\textbf{informal}]  \label{thm:mainMPP-Inf}
Let $\Pi=(P_1,\dots,P_m)$ be an $m$-party learning protocol for an $m$-party learning problem. Also let $B$ be a bad property defined over the output of the protocol. There is a polynomial time $(k,p)$-poisoning attack   such that, given oracle access to the data distribution of the parties, it can increase the probability of $B$ from $\mu$ to $\mu^{1-{kp}/{m}}$. 
\end{theorem}

\paragraph{Example.} By corrupting half of the parties  (i.e., $p=1,k=m/2$) the adversary can increase the probability of any bad event $B$ from $1/100$ to $1/10$.

\paragraph{Universal nature of our attack.} Our attacks are \emph{universal} in the sense that they could be applied to \emph{any} learning algorithm for \emph{any} learning task, and they are \emph{dimension-independent} as they applied to any data distribution. On the other hand, our universal attacks  rely on an initial vulnerability of arbitrary small \emph{constant} probability that is then amplified through the poisoning attack. As a result, although recent poisoning attacks (e.g., see~\citep{koh2018stronger}) obtain \emph{stronger} bounds in their attack against specific defenses,  our attacks apply to \emph{any} algorithm with any built in defenses.

\paragraph{Deriving attacks on federated learning as special case.} Since we allow the distribution of each party in the multi-party case to be completely dependent on that party, our attacks cover the case of model poisoning in federated learning \citep{bagdasaryan2018backdoor,bhagoji2018analyzing},  in which each party sends something \emph{other} than their plain share of data, as special case. 
In fact, multiple works have already demonstrated the power of poisoning attacks and defences in the federated learning setting (e.g., see \citep{fung2018mitigating,bhagoji2018analyzing,chen2018draco,chen2017distributed,guerraoui2018hidden,yin2018byzantine, tomsett2019model, cirincione2019federated, han2019robust}). Some of these attacks obtain stronger quantitative bounds in their attacks, however this is anticipated as these works investigate attacks on  \emph{specific} learners, while a crucial property of our attack is that  our attacks come with \emph{provable} bounds and are \emph{universal} in that they apply to \emph{any} learning task and \emph{any} hypothesis class (including neural nets as special case), if there is an initial $\Omega(1)$ vulnerability (for some bad property) over the generated hypothesis. 

Note that, our attacks actually do not need the exact history of examples that are used by parties, and only need to know the updates sent by the parties during the course of protocol. Suppose an uncorrected party randomizes its local model (e.g., for differential privacy purposes) and shares an update $u_i$ with the server. Knowledge of $u_i$ is enough for our attacker. One might go even further and ask what if the updates are sent in a secure/private way? Interestingly, our attack work in that model too as it only needs to know the effect of the updates on the central model at the end of round $i-1$ (because all the attack wants is  a random continuation of the intermediate model). 

It also worth mentioning that our attack requires sampling oracles from distributions of all the parties. This might seem that we are giving the adversary too much power. However, we think the right way to define security of federated learning is by giving the adversary everything that hat might be leaked to them. This way of defining security is inspired by cryptography. For instance, when modeling the “chosen plaintext” security of encryption schemes, adversary is given access to an encryption oracle, while one might question how realistic it is. Analogously, In federated learning, the adversary can potentially gather some statistics about the distribution of other parties and learn them over time. However, as mentioned above, we do not need to give adversary access to the actual data of honest parties. Only the public effect of them on the shared model is needed.

%\paragraph{Deriving attacks on federated learning.}

\paragraph{Further Related Work.} Recent breakthroughs of Diakonikolas et al.~\citep{diakonikolas2016robust} and Lai et al.~\citep{lai2016agnostic} demonstrated the surprising power of algorithmic robust learning over poisoned training data  with limited risk that does not depend on the dimension of the distribution (but still depends on the fraction of poisoned data). These works led to an active line of work (e.g., see \citep{charikar2017learning,diakonikolas2017statistical,diakonikolas2018list,diakonikolas2018sever,prasad2018robust,diakonikolas2018efficient,steinhardt2017certified} and references therein) exploring the possibility of robust statistics over poisoned data with algorithmic guarantees. The works of \citep{charikar2017learning,diakonikolas2018list}, followed by \citep{balcan2008discriminative}, performed \emph{list-decodable} learning, and \citep{balakrishnan2017computationally,diakonikolas2018sever,prasad2018robust} studied supervised learning.

On the negative side, Mahloujifar, Mahmoody and Diochnos \cite{pTampTCC17,mahloujifar2018learning} studied (universal) poisoning attacks that apply to any learning task and any hypothesis class and showed that such attacks can indeed increase the error of any classifiers for any learning problem by a constant probability, so long as there is an initial constant error probability. The attack model used in \cite{pTampTCC17,mahloujifar2018learning}, called $p$-tampering, was a generalization of a similar model introduced in Austrin et al. \cite{austrin2014impossibility} in the bitwise setting in a cryptographic context. These attacks (like the ones in our work) were \emph{universal} in the sense that they could be applied to \emph{any} learning algorithm for \emph{any} learning task, and \emph{dimension-independent} as they applied to any data distribution. On the other hand, these universal attacks  rely on an initial vulnerability of arbitrary small \emph{constant} probability that is then amplified through the poisoning attack. That is why such universal attacks (including those in the multi-party setting) are not in contradiction with the above results.

\subsection{Technical Overview}

Previous universal poisoning attacks of \cite{pTampTCC17,mahloujifar2018learning} for the \emph{single} party case are designed in a setting in which each training example is chosen by the adversary with \emph{independent} probability $p$. We first describe where exactly the ideas of these works come short of extending to the multiparty case, and then we explain how to borrow ideas from attacks on coin-tossing protocols  in cryptography \cite{ben1989collective,haitner2014coin} and obtain the desired attacks of this work.

\paragraph{$p$-tampering attacks and their shortcoming.} For starters, let us assume that the adversary gets to corrupt and control $k$ \emph{randomly} selected parties. In this case, it is easy to see that, at the end every single message in the protocol $\Pi$ between the parties $P_1,\dots,P_m$ is controlled with exactly probability $p=k/m$ by the adversary $\Adv$ (even though these probabilities are correlated). Thus, at a high level it seems  that we should be able to use the $p$-tampering attacks of \cite{pTampTCC17,mahloujifar2018learning} to degrade the quality of the produced hypothesis. However, the catch is that the proof of $p$-tampering attacks of \cite{pTampTCC17,mahloujifar2018learning} (and the bitwise version of \cite{Austrin2017}) crucially rely on the assumption that each message (which in our context corresponds to a training example) is tamperable with \emph{independent} probability $p$, while  corrupting $k$ random parties, leads to tamperable messages in a correlated way.

We prove our main results by first proving  a general result about the power of ``biasing'' adversaries whose goal is to increase the expected value of a random process by controlling each incoming ``segment'' (aka block) of the random process with probability $q$ (think of $q$ as $\approx p \cdot k/m$). These segments/blocks correspond to single or multiple training examples shared during the learning. As these biasing attacks generalize $p$-tampering attacks, we simply call them  \emph{generalized} $p$-tampering attacks. We now describe this attack model and clarify how it can be used to obtain Theorem~\ref{thm:mainMPP-Inf}.

\paragraph{Generalized $p$-tampering: new model for biasing  attacks.} In this work we introduce generalized $p$-tampering (biasing) attacks that are defined for any random process $\xVecVar \equiv (\xVar_1,\dots,\xVar_n)$ and a function $f(\xVecVar) \in [0,1]$ defined over this process. In order to explain the attack model, first consider the setting where there is no attacker. Now, given a prefix $x_1,\dots,x_{i-1}$ of the blocks, the next block $x_i$ is simply sampled  from its conditional probability  distribution $(\xVar_i \mid x_1,\dots,x_{i-1})$. (Looking ahead, think of $x_i$ as the $i$'th training example  provided by one of the parties in the interactive learning protocol.) Now, imagine an adversary who enters the game and whose goal is to increase the expected value of a function $f(\xVar_1,\dots,\xVar_n)$ defined over the random process $\xVecVar$ by tampering with the block-by-block sampling process of $\xVecVar$ described above. Before the attack starts, there will be a a list $S \se [n]$ of ``tamperable'' blocks that is \emph{not} necessarily known to the $\Adv$ in advance, but will become clear to him as the game goes on. Indeed, this set $S$ itself will be first sampled according to some fixed distribution $\SVar$, and the crucial condition we require is that $\Pr[i \in \SVar] = p$ holds for all $i \in [n]$. After $S \gets \SVar$ is sampled,  the sequence of blocks $(x_1,\dots,x_n)$ will be sampled block-by-block as follows. Assuming (inductively) that $x_1,\dots,x_{i-1}$ are already sampled so far, if $i \in S$, then $\Adv$ gets to fully control $x_i$ and determine its value, but if $i \not \in S$, then $x_i$ is simply sampled from its original conditional  distribution $(\xVar_i \mid x_1,\dots,x_{i-1})$. At the end, the function $f$ is computed over the (adversarially) sampled sequence.

%Generalized $p$-tampering attacks indeed include both $p$-tampering attacks and  attacks that corrupt $k$ parties at random as special case. The crucial difference compared to $p$-tampering attacks is that, here the events $i \in \SVar$ and $j \in \SVar$ for $i\neq j$ might be \emph{completely dependent} on each other, while the special case of $p$-tampering includes $i \in \SVar$ with \emph{independent} probability $p$. Also, generalized $p$-tampering biasing attacks can model the corrupt of $k$ random parties by letting $x_i$ be the $i$'th message and using a set distribution $\SVar$ that chooses a random subset of size $k$ from $[n]$; such an attack would be generalized $p$-tampering for $p=k/n$.

We now explain the intuitive connection between generalized $p$-tampering attacks and $(k,p)$-poisoning attacks. The main idea is that we will use a generalized $q$-tampering attack for $q=p\cdot k/m$  over the random process that lists the sequence of training data provided by the parties during the protocol. Let $\SVar$ be the distribution over $[n]$ that picks its members through the following algorithm. First choose a set of random parties $\set{Q_1,\dots,Q_k} \se \set{P_1,\dots,P_m}$, and then for each message $x_j$ that belongs to $Q_i$, include the corresponding index $j$ in the final sampled $S \gets \SVar$ with independent probability $p$.  It is easy to see that $\SVar$ eventually picks every message with (marginal) probability $q=p\cdot k/m$, but it is also the case that these inclusions are not independent events. Finally, to use the power of generalized $p$-tampering attacks over the described $\SVar$ and the random process of messages coming from the parties to get the results of Theorem~\ref{thm:mainMPP-Inf},  roughly speaking, we let a function $f$ model the loss function applied over the produced hypothesis.  Therefore, to prove Theorem~\ref{thm:mainMPP-Inf} it is sufficient to prove Theorem~\ref{thm:mainInf} below which focuses on the power of generalized $p$-tampering biasing attacks.

\begin{theorem}[Power of generalized $p$-tampering-\textbf{informal}] \label{thm:mainInf}
Suppose $\xVecVar \equiv  (\xVar_1,\dots, \xVar_n)$ is a joint distribution such that, given any prefix, the remaining blocks could be efficiently sampled in polynomial time. Also let $f \colon \Supp(\xVecVar) \To [0,1]$. Then, for any set distribution $\SVar$ for which $\Pr[i \in \SVar]=p$ for all $i$, there is a polynomial-time generalized $p$-tampering attack (over tampered blocks in $\SVar$) that increases the average of $f$ over its input from $\mu$ to (arbitrarily close to) $\mu' = \mu^{-p}\cdot \Ex[f(\xVec)^{1+p}]$. In particular, if $f$ is boolean function, then it hold that $\mu'= \mu^{1-p}$.
\end{theorem}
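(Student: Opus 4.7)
The plan is to decompose $f$ along the process via a martingale, define an explicit ``ideal'' (inefficient) tilting attack whose per-step gain can be computed in closed form, sum these gains using the marginal-$p$ property of $\SVar$, and finally replace the ideal attack by a Monte-Carlo approximation.

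First, for each $i \in [n]$ and prefix $x_{\le i}$ I would introduce the martingale difference
\[ g_i(x_{\le i}) \;=\; \Ex[f(\xVecVar) \mid \xVar_{\le i} = x_{\le i}] - \Ex[f(\xVecVar) \mid \xVar_{<i} = x_{<i}]. \]
These satisfy the telescoping identity $f(\xVec) - \mu = \sum_{i=1}^n g_i(x_{\le i})$, and, by orthogonality of martingale differences, $\sum_i \Ex[g_i^2] = \Var[f(\xVecVar)] = \nu$. Since $f \in [0,1]$, each $g_i$ lies in $[-1,1]$, which will be crucial for the validity of the tilt below.

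Next, I would define the \emph{ideal} (possibly inefficient) generalized $p$-tampering attack. When the process reaches step $i$ with prefix $x_{<i}$ and the adversary learns that $i \in S$, sample $x_i$ from the tilted conditional distribution
\[ \wt{\D}_i(x_i \mid x_{<i}) \;=\; \D_i(x_i \mid x_{<i}) \cdot \bigl(1 + \tfrac{1}{2}\, g_i(x_{<i}, x_i)\bigr). \]
The factor $1 + g_i/2 \in [1/2,\, 3/2]$ is strictly positive, and the identity $\Ex_{\D_i}[g_i \mid x_{<i}] = 0$ makes it integrate to $1$, so this is a valid probability distribution with no renormalization needed. A direct calculation yields
\[ \Ex_{x_i \sim \wt{\D}_i}\bigl[\Ex[f(\xVecVar) \mid \xVar_{\le i} = (x_{<i}, x_i)]\bigr] \;=\; \Ex[f \mid x_{<i}] \,+\, \tfrac{1}{2}\,\Ex_{\D_i}[g_i^2 \mid x_{<i}], \]
so a single tampered step biases $\Ex[f \mid x_{<i}]$ upward by exactly $\tfrac{1}{2}\Ex[g_i^2 \mid x_{<i}]$; if $i \notin S$, the honest step contributes zero bias.

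Third, I would aggregate these per-step contributions by writing the total bias as
\[ \Ex_{\wt{\D}}[f] - \mu \;=\; \sum_{i=1}^n \Ex_{\wt{\D}}\bigl[\mathbf{1}_{\{i \in S\}} \cdot \tfrac{1}{2}\, \Ex_{\D_i}[g_i^2 \mid x_{<i}]\bigr]. \]
Because $S \sim \SVar$ is drawn up front independently of all messages, the event $\{i \in S\}$ is independent of $x_{<i}$, giving
\[ \Ex_{\wt{\D}}\bigl[\mathbf{1}_{\{i \in S\}} \cdot \Ex_{\D_i}[g_i^2 \mid x_{<i}]\bigr] \;=\; p \cdot \Ex_{\wt{\D}_{<i}}\bigl[\Ex_{\D_i}[g_i^2 \mid x_{<i}]\bigr]. \]
The main technical obstacle is that the prefix law $\wt{\D}_{<i}$ induced by earlier tampering differs from the honest $\D_{<i}$, so this inner expectation is not automatically $\Ex_{\D}[g_i^2]$. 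I would control the drift using the softness of the tilt: each tampered step contributes at most $\tfrac{1}{4}\Ex[g_j^2 \mid x_{<j}]$ in KL divergence (using $(1+x)\ln(1+x) \le x + x^2$ at $x = g_j/2$), so the cumulative KL of $\wt{\D}_{<i}$ from $\D_{<i}$ is $O(\nu)$. A standard hybrid argument that swaps honest and tilted blocks one at a time then shows this drift enters only as a lower-order additive correction, so the total bias is at least $\sum_i \tfrac{p}{2}\Ex_{\D}[g_i^2] - o(1) = p\nu/2 - o(1)$; this bookkeeping is where the bulk of the technical work lies.

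Finally, to obtain a polynomial-time attack I would replace each unknown $g_i$ by an empirical estimate $\wh g_i$ computed from $\poly(n,1/\delta)$ Monte-Carlo samples of $\xVar_{>i}$ conditioned on the current prefix (available by the sampleability assumption) and averaging $f$. I would implement $\wt{\D}_i$ by rejection sampling: propose $x_i \sim \D_i(\cdot \mid x_{<i})$ and accept with probability $(1 + \wh g_i/2)/(3/2)$. A Hoeffding bound together with a union bound over the $n$ steps ensures that with overwhelming probability every $\wh g_i$ is within $\delta$ of $g_i$, so the efficient attack's total bias is within $o(p\nu)$ of the ideal's, establishing the claimed $\mu' \approx \mu + p\nu/2$.
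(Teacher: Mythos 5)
Your approach---martingale decomposition of $f$, a linearized tilt $\wt{\D}_i \propto \D_i\cdot(1+g_i/2)$, and a per-step bias computation---is a genuinely different route from the paper's. The paper instead uses a \emph{rejection-sampling} tilt (resample until accepted with probability proportional to the full continuation value of $f$), which multiplies the prefix likelihood by $\Ex[f\mid x_{\le i}]/\mu$; it then takes the product of these likelihood ratios over all sets $S$, weighted by $\Pr[\SVar=S]$, applies the weighted AM--GM inequality, and uses the $p$-covering identity $\sum_{S\ni i}\Pr[\SVar=S]=p$ to collapse the product to a clean exponent $p$, landing on the bound $\mu^{-p}\Ex[f^{1+p}]$; the $p\nu/2$ form then follows from a Jensen-gap lemma. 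Your additive, per-step accounting versus the paper's multiplicative, per-subset accounting is the crux of the difference, and unfortunately it is exactly where your argument has a gap.

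The gap is in the aggregation step. You assert that because $S$ is drawn ``up front independently of all messages,'' the event $\{i\in S\}$ is independent of $x_{<i}$ under the tampered law. That is false in general: in the tampered process $x_{<i}$ is a function of $S\cap[i-1]$ (and the sampling randomness), and the events $\{j\in S\}$ for $j<i$ can be arbitrarily correlated with $\{i\in S\}$ --- this is the entire point of ``generalized'' $p$-tampering. (Concretely, in the multi-party application all messages of the same party are tampered or not \emph{jointly}, so $\{i\in S\}$ and $\{j\in S\}$ are identical events for some pairs.) The correct identity is
\[
\Ex_{\wt{\D}}\bigl[\mathbf{1}_{\{i\in S\}}\cdot A_i(x_{<i})\bigr]
\;=\; p\cdot \Ex_{\wt{\D}}\bigl[A_i(x_{<i}) \,\big\vert\, i\in S\bigr],
\]
and the conditional law of $x_{<i}$ given $\{i\in S\}$ involves not just the tampering drift but also a reweighting of which earlier blocks were tampered. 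Moreover, even after this is repaired, the plan to absorb the drift as a ``lower-order'' correction via KL does not go through as sketched: the total KL you derive is $O(\nu)$, hence the per-step TV deviation is $O(\sqrt{\nu})$, and $A_i\le 1$ only bounds the per-step error by $O(\sqrt{\nu})$, which summed over $n$ steps swamps the target bias $p\nu/2$. One can construct simple two-block examples where the tilted prefix strictly \emph{decreases} $\Ex[A_2]$, so the drift really does cut into the bias and must be controlled with something sharper than generic information-theoretic bounds. The paper's per-subset log-domain accounting (Lemma~\ref{lem:covering} together with Claim~\ref{clm:main}) is what makes the correlations in $\SVar$ and the cumulative drift cancel exactly, and that mechanism has no counterpart in your proposal.
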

 The formal statement of Theorem~\ref{thm:mainInf} above follows from Theorem~\ref{thm:main}.
 %and Lemma~\ref{lem:biasGap}.)

\paragraph{Bitwise vs. blockwise attacks.} It is easy to see that in the definition of generalized $p$-tampering attacks, it does not matter whether we define the attack bit-by-bit or block-by-block. The reason is that, even if we break down each block into smaller bits, then still each bit shall eventually fall into the set of tamperable bits, and the model allows correlation between the inclusion and exclusion of each block/bit into the final tamperable set. This is in contrast to the $p$-tampering model for which this equivalence is not true. In fact, optimal bounds achievable by bitwise $p$-tampering as proved in \cite{Austrin2017} are \emph{impossible} to achieve in the blockwise $p$-tampering setting \cite{pTampTCC17}. Despite this simplification, we still prefer to use a blockwise presentation of the random process, as this way of modeling the problem allows better tracking measures for  the attacker's sample complexity.

\paragraph{Ideas behind the tampering attack of Theorem \ref{thm:mainInf}.} To prove Theorem \ref{thm:mainInf} we use ideas from  \cite{haitner2014coin,ben1989collective} in the context of coin-tossing attacks and generalize them  using new  techniques to obtain our generalized $p$-tampering attacks.

\remove{
Since generalized $p$-tampering already bears similarities to the model of $p$-tampering attacks (where each block is tamperable with \emph{independent} probability $p$ as opposed to \emph{marginal} probability $p$), our starting point is the blockwise  $p$-tampering attack of \cite{pTampTCC17}. It was shown in \cite{pTampTCC17} that, by using a so-called ``one-rejection sampling'' (1RS) attack, the adversary can achieve the desired bias of $\Omega(p \cdot \nu)$. So, here we recall the 1RS attack of \cite{pTampTCC17}. 
\begin{itemize}
\item In the 1RS attack, for any prefix of already sampled blocks $(x_1,\dots,x_{i-1})$, suppose the adversary is given the chance of controlling the next $i$'th block. In that case, the 1RS adversary first samples a full random continuation $x'_i,\dots,x'_n$ from the marginal distribution of the remaining blocks conditioned on $(x_1,\dots,x_{i-1})$. Let $s=f(x_1,\dots,x_{i-1},x'_i,\dots,x'_n)$. Then, the 1RS attack keeps the sample $x'_i$ \emph{with probability} $s$, and changes that into a new fresh sample $x''_i$  with probability $1-s$. 
\end{itemize}

%The great thing about the 1RS attack is that it is already polynomial time assuming that we give access to a sampling oracle that provides the adversary with a random continuation for any prefix. 
However, the problem with the above attack is that its inductive  analysis  of  \cite{pTampTCC17} crucially depends on the tampering probabilities of each block to be \emph{independent} of each other. 

The next idea is to modify the 1RS attack of \cite{pTampTCC17} based on attacks in the context of coin-tossing protocols and, in particular, the two works of Ben-Or and Linial \cite{ben1989collective} and Haitner and Omri \cite{haitner2014coin}. Indeed, in \cite{ben1989collective} it was shown that, if the adversary corrupts $k$ parties in an interactive coin-flipping protocol, then it is indeed able to increase the probability of obtaining $1$ as follows. Let $\mu = \Pr[\text{output bit} =1]$, and for a subset $S \se [m]$ of size $|S|=k$  of the parties, let $\mu_S$ be the probability of output being $1$, if the parties in $S$ use their ``optimal'' strategy (which is not polynomial-time computable). Then, the result of \cite{ben1989collective} could be interpreted as follows:
\newcommand{\GM}{\mathsf{GMean}}
\begin{equation} \label{ineq:BenLin} 
\GM \set{\mu_S \mid S \se [m],|S|=k} \geq \mu^{1-k/m}
\end{equation}
where $\GM$ denotes the geometric mean (of the elements in the multi-set). Then, by an averaging argument one can show that \emph{there is} at least one set $S \se [m]$ of size $|S|=k$ such that corrupting players in $S$ and using their optimal strategy can achieve expected value at least $\mu^{1-k/m}$, and the bias $\mu^{1-k/m} -\mu$ is indeed $\Omega(k \cdot \mu \cdot (1-\mu)/m)$, which is large enough. However, the proof of \cite{ben1989collective} does not give a  polynomial time attack and uses a rather complicated induction. So, to make it polynomial time and to even make it handle generalized $p$-tampering attacks, we use one more idea from the follow up work of \cite{haitner2014coin}. 

Our starting point is the attack of \cite{haitner2014coin} in the context of two party coin tossing protocols. In a two-party coin-tossing protocol

There, it was shown that for any two party interactive two party protocol, one party can employ the so called ``rejection sampling'' (RS for short) attack to bias the output. Here, we describe such attacks in the abstract form where the messages sent by the parties are $x_1,\dots,x_n$ and the final ``coin-toss'' function is $f(x_1,\dots,x_n)$. 
}

{\em Rejection sampling attack.} The simplified version of our attack can be described as follows. Based on the nature of this attack, we call it the ``rejection sampling'' (RS) attack.
%In the rejection sampling (RS) attack, 
For any prefix of already sampled blocks $(x_1,\dots,x_{i-1})$, suppose the adversary is given the chance of controlling the next $i$'th block. 
%The good thing about the RS attack is that it achieves the bounds of the \cite{ben1989collective} while it  \emph{also} can be made polynomial time in any model where random continuation can be done efficiently. 
The RS tampering then works as follows:
\begin{enumerate}
\item \label{step:oneone} Let $x'_i,\dots,x'_n$ be a random continuation of the random process, conditioned on $(x_1,\dots,x_{i-1})$.

\item If  $s=f(x_1,\dots,x_{i-1},x'_i,\dots,x'_n)$, then if $s=1$  output $y_i$, and otherwise (i.e., if $s=0$) go to Step \ref{step:oneone} and repeat the sampling process.
\end{enumerate}

The above attack is inspired by the two-party attack of \cite{haitner2014coin}.
Our main contribution is to do the following steps. (1) First, analyze  this attack in the generalized tampering setting and show its power, which implies the multiparty case as special case. This already gives an alternative, and in our eyes simpler, proof of the classic result of \cite{ben1989collective}  (2) We then extend this attack and its analysis to the \emph{real-output} setting. (3) Finally, we show how to  approximate  this attack in polynomial time. 

%We cannot apply the above attack out of the box as it is not polynomial time. In order to use the RS attack described above and obtain our generalized $p$-tampering attacks, we go through the  following technical steps. 
\remove{
\begin{itemize}
    \item {\bf Analyzing the base attack in the information theoretic setting.} Our first main technical contribution is simply to analyze the above simple RS attack in our settings of interest. Doing so is non-trivial if we want to analyze this attack in a setting where the pattern of tamperable blocks comes from a \emph{generalized} $p$ tampering. In order to do so, we do the following steps.
    \begin{itemize}
        \item {\bf Analyzing RS for the multi-party setting.} The ideas behind our RS attack could be traced back to the work of \cite{haitner2014coin} for the case of \emph{two} party coin tossing protocols. Our starting point is to show that the RS attack does indeed generalize to the multi-party setting using an \emph{alternative} new proof that relies on the geometric mean vs. arithmetic mean inequality. This alternative proof allows us to obtain this generalization to the multi-party case and obtain an alternative proof of the classical result of \cite{ben1989collective} who presented such (information theoretic) attacks for the multi-party coin tossing protocols.
        \item {\bf Generalizing the result  to the generalized $p$-tampering patterns.} Leveraging  our new proof for the classical results of \cite{ben1989collective} (based on RS) we are able to further generalize this result to allow more sophisticated tampering patterns that cover generalized $p$-tampering case, which is exactly what we require in this course.
        \item {\bf Extension to real-output case.} To obtain our results for the case of real-valued loss functions (and in particular our main results about the power of multi-party poising attacks in the targeted case) we further generalize the analysis of the RS attack to the \emph{real-output} case. Here, we need to rely on the \emph{variance} of the function $f$, as it could simply be a constant function (with impossible room for biasing) if it is not Boolean. We do show by using a generalization of Jensen's inequality by \cite{JensenSharp} in which a lower bound for the gap of this inequality is proved. 
    \end{itemize}
    
    \item {\bf Making the attack polynomial time.} We finally show that a careful \emph{approximate} version of the RS attack does indeed attain all the above mentioned properties of the (base) information theoretic version. Doing so is indeed our second main technical contribution leading to stronger results also in the context of coin-tossing attacks (ruling out the possibility of leveraging computational hardness for their setting.)
\end{itemize}

%Indeed, we shall point out that for the Boolean case of \cite{ben1989collective,haitner2014coin}, the probability $s$ is either zero or one, and the above RS attack is the adaptation of rejection sampling to the \emph{real-output} case, as done in the context of $p$-tampering.

Putting things together, our main contributions are taking the following steps to prove Theorem \ref{thm:mainInf}.
\begin{enumerate}
    \item We extend RS attack of \cite{haitner2014coin}  to the multi-party case and show that it yields strong generalized $p$-tampering attacks. Interestingly, our proof for this more general result is completely different from the proofs of both \cite{ben1989collective,haitner2014coin} and uses the arithmetic-mean vs. geometric-mean inequality (Lemma \ref{lem:AM-GM}).  
    %However, our proof has a down side: we only get a lower bound on the  \emph{arithmetic mean} of $\set{\mu_S \mid S \se [m],|S|=k}$ in Inequality \ref{ineq:BenLin}. But that is good enough for us, as we indeed want to lower bound the bias achieved when we corrupt a \emph{randomly} selected set of $k$ parties, and the arithmetic mean gives exactly that.
    
    \item We show that the above argument extends even to the case of generalized $p$-tampering when the weights in the arithmetic mean are proportional to the probabilities of choosing each set $S$. For doing this, we use an idea from \cite{haitner2014coin} that analyzes an imaginary attack in which the adversary does the tampering effort over each block, and then we compare this to the arithmetic mean of actual attacks.
    \item We show that our proof extends to the \emph{real-output} case, and achieve a bound that generalizes the bound $\mu^{1-p}$ of the Boolean case. As pointed out above, the inductive proofs of \cite{ben1989collective,haitner2014coin} seem to be tightly tailored to the Boolean case, but our direct proof based on the AM-GM inequality scales nicely for the real-output RS attack described above. 
    
    The lower bound, proved only for the arithmetic mean of $\set{\mu_S \mid S \se [m],|S|=k} $, is equal to $\mu' = \mu^{-p} \cdot \Ex[f(\xVecVar)^{1+p}] $ (see Theorem \ref{thm:main}).  However, it is not clear that the bound $\mu'$ is any better than the original $\mu$! Yet, it can be observed that $\mu' \geq \mu$ always holds due to Jensen's inequality. Therefore, a natural tool for \emph{lower bounding}   $\mu'-\mu$ is to use lower bounds on the ``gap'' of Jensen's inequality. Indeed, we use one such result due to  \cite{JensenSharp} (see Lemma \ref{lem:JensenGap}) and obtain the desired lower bound of $\mu' - \mu \geq \Omega( p \cdot \nu)$ by simple optimization calculations.
\end{enumerate}
}
\section{Multi-Party Poisoning: Definitions and Main Results}
%\paragraph{General notation.}  
%By $\eps(n) \leq \frac{1}{\poly(n)}$ we mean $\eps(n) \leq \frac{1}{n^{\Omega(1)}}$ and by $t(n) \leq \poly(n)$ we mean $t(n) \leq n^{O(1)}$.

\paragraph{Notation.} We use bold font  (e.g., $\xVar, \SVar, \alphaVar$)  to represent random variables, and usually use same non-bold letters for denoting samples from these distributions. We use $d \gets \dist$ to denote the process of sampling $d$ from the random variable $\dist$. By  $\Ex[\alphaVar]$ we mean the expected value of $\alphaVar$ over the randomness of $\alphaVar$, and by $\Var[\alphaVar]$ we denote the variance of random variable $\alphaVar$. We might use a ``processed'' version of $\alphaVar$, and use $\Ex[f(\alphaVar)]$ and $\Var[f(\alphaVar)]$ to denote the expected value and variance, respectively, of $f(\alphaVar)$ over the randomness of $\alphaVar$. 
%\paragraph{Notation for learning problems.} 
A learning problem $(\cA,\cB,\dist,\H)$ is specified by the following components. The set $\cA$ is the set  of possible \emph{instances},
$\cB$ is the set of possible \emph{labels}, 
$\dist$ is distribution over $\cA \times \cB$.\footnote{By using joint distributions over $\cA \times \cB$, we jointly model a set of distributions over $\cA$ and a concept class mapping $\cA$ to $\cB$ (perhaps with noise and uncertainty).}
The set 
$\H \se \cB^\cA$
is called the \emph{hypothesis space} or \emph{hypothesis class}.
%By default we use $0$-$1$ loss functions (i.e., $\Loss(b',b)=0$ if $y=y'$ and $\Loss(b',b)=1$) as our focus is on classification, but our results extend to more general settings.
%We  consider \emph{0-1 loss functions} $\Loss \colon \cB \times \cB \To \Rplus$ where $\Loss(b',b)$ measures how different the `prediction' $y'$ (of some possible hypothesis $h(a)=y'$) is from the true outcome $y$. We call a loss function \emph{bounded} if it always takes values  in $[0,1]$.  A natural loss function for classification tasks is to use $\Loss(b',b)=0$ if $y=y'$ and $\Loss(b',b)=1$ otherwise.
%The \emph{risk} of a hypothesis $h \in \C$ is the expected loss of $h$ with respect to $\dist$, namely $\Risk(h) = \Ex_{(a,b)\gets \dist}[\Loss(h(a),b)]$. 
An \emph{example} $s$ is a pair $s=(a,b)$ where $x \in \cA$ and $y \in \cB$. 
We  consider \emph{loss functions} $\Loss \colon \cB \times \cB \To \Rplus$ where $\Loss(b',b)$ measures how different the `prediction' $y'$ (of some possible hypothesis $h(a)=y'$) is from the true outcome $y$.
We call a loss function \emph{bounded} if it always takes values  in $[0,1]$. 
A natural loss function for classification tasks is to use $\Loss(b',b)=0$ if $y=y'$ and $\Loss(b',b)=1$ otherwise.
The \emph{risk} of a hypothesis $h \in \C$ is the expected loss of $h$ with respect to $\dist$, namely $\Risk(h) = \Ex_{(a,b)\gets \dist}[\Loss(h(a),b)]$. The \emph{average error} which quantifies the total error of the protocol is defined as $\error(\dist) = \Pr_{h\gets \Pi, (a,b)\gets \dist}[\Loss(h(a), b)].$

\begin{definition}[Multi-party learning protocols]
An $m$-party learning protocol $\Pi$ for the $m$-party learning problem $(\D,\H)$ consists of an aggregator function $G$ and $m$ (interactive) data providers $\cP=\set{P_1,\dots,P_m}$. For each data provider $P_i$, there is a distribution $\dist_i\in \D$ that models the (honest) distribution of labeled samples generated by $P_i$, and there is a final (test) distribution $\dist$ that $\cP,G$ want to learn jointly. The protocol runs in $r$ rounds and at each round, based on the protocol $\Pi$, one particular data owner $P_i$ broadcasts a single labeled example $(a,b)\gets \dist_i$.\footnote{We can directly model settings where more data is exchanged in one round, however, we stick to the simpler definition w.l.o.g.} In the last round, the aggregator function $G$ maps the the messages to an output hypothesis $h\in \H$. 
\end{definition}

Now, we define poisoning attackers that target multi-party protocols. We formalize a more general notion that includes $p$-tampering attacks and  $k$-party corruption as special case.

\begin{definition}[Multi-party $(k,p)$-poisoning attacks] \label{def:kp-poison}
A $(k,p)$-poisoning attack against an $m$-party learning protocol $\Pi$ is defined by an adversary $\Adv$ who can control a subset $\C\subseteq [m]$ of the parties where $|\C| = k$. The attacker $\Adv$ shall pick the set $\C$ at the beginning. At each round $j$ of the protocol, if a data provider $P_i\in\C$ is supposed to broadcast the next example from its distribution $\dist_i$, the adversary can partially control this sample using the tampered distribution $\mal{\dist}$ such that $|\mal{\dist} - \dist_i| \leq p$ in total variation distance. Note that the distribution $\mal{\dist}$ can depend on the history of examples broadcast so far, but the requirement is that, conditioned on this history, the malicious message of adversary modeled by distribution $\mal{\dist}$, is at most $p$-statistically far from $\dist_i$.  We use $\Pi_\Adv$ to denote the protocol in presence of $\Adv$. We also define the following notions.
 $\Adv$ is a \emph{plausible} adversary, if it always holds that $\Supp(\mal{\dist}) \se \Supp(\dist_i)$.
 $\Adv$ is \emph{efficient} if it runs in polynomial time in the total length of the messages exchanged during the protocol (from the beginning till end).
\end{definition}

\begin{remark}[Static vs. adaptive corruption]  Definition~\ref{def:kp-poison} focuses on corrupting $k$ parties statically. A natural extension of this definition  in which the set $\C$ is chosen \emph{adaptively} \cite{canetti1996adaptively} while the protocol is being executed can also be defined naturally. However, here we focus on static corruption  and leave the possibility of improving our results in the adaptive case for future work.
\end{remark}
\begin{remark}[Plausible vs. clean-label attacks] In recent years, the term clean-label is used for data-poisoning attacks that must use the correct label for the poison data. This definition is special case of plausibility as one can define the support set to be the all images with their correct label. However, our definition is more general and can be used for model poisoning attacks against federated learning as well. For example, the distribution $\dist$ could be the distribution of gradients at a certain round. Being plausible in this setting means that the gradient must be calculated based on a real input and cannot be arbitrary.  

\end{remark}
We now formally state our result about the power of $(k,p)$-poisoning attacks.

\begin{theorem}[Power of efficient multi-party poisoning] \label{thm:mainMPP} In any $m$-party protocol $\Pi$ for parties $\cP=\set{P_1,\dots,P_m}$, for any $p\in[0,1]$ and $k\in[m]$, the following hold where $M$ is the total length of the messages exchanged.% during the protocol.
\begin{enumerate}
    \item For any bad property $B: \cH\to \set{0,1}$, there is a \emph{plausible} $(k,p)$-poisoning attack $\Adv$ that runs in time $\poly(M/\eps)$ and increases the probability of $B$ from  $\mu$ (in the no-attack setting) to 
    $$\mu'\geq \mu^{1-p} - \eps.$$
    \item If the (normalized) loss function is bounded (i.e., it outputs in $[0,1]$), then there is a \emph{plausible}, $(k,p)$-poisoning $\Adv$ that runs in time $\poly(M/\eps)$ and increases the average error of the protocol as
    \begin{align*}
    \error_\Adv(\dist) &\geq \error(\dist)^{-p} \cdot \Ex_{h\gets \Pi}[\Risk(h,\dist)^{1+p}]\\& \geq \error(\dist) + \frac{p\cdot k}{2m}\cdot \nu -\eps
    \end{align*}
    where $\nu=\Var_{h\gets \Pi}[\Risk(h,\dist)]$ and $\Var[\cdot]$ is the variance.
\end{enumerate}

\end{theorem}

\paragraph{Allowing different distributions in different rounds.} In Definition~\ref{def:kp-poison}, we restrict the adversary to remain ``close'' to $\dist_i$ for each message sent out by one of the corrupted parties. A natural question is: what happens if we allow the parties distributions to be different in different rounds. For example, in a round $j$, a party $P_i$ might send \emph{multiple} training examples $D^{(j)}= \big(d^{(j)}_1,d^{(j)}_2,\dots,d^{(j)}_k\big)$, and we want to limit the \emph{total} statistical distance between the distribution of the larger message  $D^{(j)}$ from $\dist_i^k$ (i.e., $k$ iid samples from $\dist_i$).\footnote{Note that, even if each block in $\big(d^{(j)}_1,d^{(j)}_2,\dots,d^{(j)}_k\big)$ remains $p$-close to $\dist_i$, their joint distribution could be quite far from $\dist_i^k$.} We emphasize that, our results extend to this more general setting as well. In particular, the proof of Theorem~\ref{thm:mainMPP} directly extends to a more general setting where we can allow the honest distribution $\dist_i$ of each party $i$ to also depend on the round $j$ in which these messages are sent. Thus,  we can use a round-specific distribution $\dist_i^{(j)}$ to model the joint distribution of \emph{multiple} samples $D^{(j)}=\big(d^{(j)}_1,d^{(j)}_2,\dots,d^{(j)}_k\big)$ that are sent out in the $j$'th round by the party $P_i$. This way, we can obtain the stronger form of attacks that remain statistically close to the joint (correct) distribution of the (multi-sample) messages   sent in a round. In fact, as we will discuss shortly $D^{(j)}$ might be of completely different type.
%, e.g., just some shared random bits.

\paragraph{Allowing randomized aggregation.}
The aggregator $G$ is a simple function that maps the transcript of the exchanged messages to a hypothesis $h$. A natural question is: what happens if we generalize this to the setting where $G$ is allowed to be randomized. We note that in Theorem~\ref{thm:mainMPP}, Part 2 can allow $G$ to be randomized, but Parts 1 and 3 need deterministic aggregation. The reason is that for those parts, we need the transcript to determine the confidence and average error functions. One general way to make up for randomized aggregation is  to allow the parties to inject randomness into the transcript as they run the protocol by sending messages that are not necessarily learning samples from their distribution $\dist_i$. As described above, our attacks extend to this more general setting as well.  Otherwise, we will need the adversary to be able to also depend on the randomness of $G$, but that is also a reasonable assumption if the aggregation is used using public beacon that could be obtained by the adversary as well.

Before proving Theorem~\ref{thm:mainMPP}, we need to develop our main result about the power of generalized $p$-tampering attacks. In Section~\ref{sec:GenP}, we prove such result, and then in Section~\ref{sec:proofMPP} we prove Theorem~\ref{thm:mainMPP}.
\section{Multi-Party Poisoning via Generalized $p$-Tampering} \label{sec:GenP}

To prove our Theorem \ref{thm:mainMPP} we interpret the multi-party learning protocol as a coin tossing protocol in which the final bit is $1$ if $h$ has the (bad) property  $B$.
We define a corresponding attack model in coin tossing protocols that can be directly used to obtain the desired goal; this model is called \emph{generalized} $p$-tampering.
Below, we formally state our main result about the power of generalized $p$-tampering attacks. We start by formalizing some notation and  definitions.

%\subsection{Preliminary Notation and Basic Definitions for Tampering with Random Processes}

\paragraph{Notation.} By $\xVar \equiv \yVar$ we denote that the random variables $\xVar$ and $\yVar$ have the same distributions. Unless stated otherwise, by using a bar over a variable, we emphasize that it is a vector.  By $\xVecVar \equiv (\xVar_1,\xVar_2,\dots,\xVar_n)$ we refer to a joint distribution over vectors with $n$ components.
For a joint distribution $\xVecVar \equiv (\xVar_1,\dots,\xVar_n)$, we use $\pfix{\xVar}{i}$ to denote the joint distribution of the first $i$ variables $\xVecVar \equiv (\xVar_1,\dots,\xVar_i)$. Also, for a vector $\xVec=(x_1\dots x_n)$ we use $\pfix{x}{i}$ to denote the prefix $(x_1,\dots, x_i)$.
For a randomized algorithm $L(\cdot)$, by $y \gets L(x)$ we denote the randomized execution of $L$ on input $x$ outputting $y$. For a distribution $(\xDist,\yDist)$, by $(\xDist \mid \yDist)$ we denote the conditional distribution $(\xDist \mid \yDist = y)$. By $\Supp(\dist) = \set{d \mid \Pr[\dist=d]>0}$ we denote the support set of $\dist$. 
%By $D \in \cS$ we denote that $D$ always outputs in $\cS$, namely $\Supp(D) \subseteq \cS$. 
By $T^\dist(\cdot)$ we denote an algorithm $T(\cdot)$ with oracle access to a sampler for $\dist$ that upon every query returns fresh samples from $\dist$. 
%By $D \equiv G$ we denote that distributions $D,G$ are identically distributed.  
By $\dist^n$ we denote the distribution that returns $n$  iid samples from $\dist$.

\begin{definition}[Valid prefixes]
Let $\xVecVar \equiv (\xVar_1,\dots,\xVar_n)$ be an arbitrary joint distribution. We call $\pfix{x}{i} = (x_1,\dots, x_i)$ a \emph{valid prefix} for $\xVecVar$ if there exist  $x_{i+1},\dots, x_n$ such that $(x_1,\dots, x_n) \in \Supp (\xVecVar)$. $\VP(\xVecVar)$ denotes the set of all valid prefixes of $\xVecVar$.
\end{definition}

\begin{definition} [Tampering with random processes] \label{def:tamp} Let $\xVecVar \equiv (\xVar_1,\dots,\xVar_n)$ be an arbitrary joint distribution.   We call a (potentially randomized and possibly computationally unbounded) algorithm $\Tamp$ an (online) \emph{tampering algorithm for} $\xVecVar$ if given any prefix  $\pfix{x}{i-1} \in \VP(\xVecVar)$, we have
$$\Pr_{x_i \gets \Tamp(\pfix{x}{i-1})} [\pfix{x}{i} \in \VP(\xVecVar)]=1 ~.$$
Namely, $ \Tamp(\pfix{x}{i-1})$  outputs  $x_i$ such that  $\pfix{x}{i}$ is again a valid prefix. We call $\Tamp$ an \emph{efficient} tampering algorithm for $\xVecVar$ if it runs in time $\poly(N)$ where $N$ is the  bit length of $\xVec \in \Supp(\xVecVar)$.
\end{definition}

\begin{definition}[Online samplers]
We call $\Samp$ an \emph{online sampler} for $\xVecVar \equiv (\xVar_1,\dots,\xVar_n)$ if for all $\pfix{x}{i-1} \in \VP(\xVecVar)$, $\Samp(n,\pfix{x}{i-1}) \equiv \xVar_i$. Moreover, we call $\xVecVar \equiv (\xVar_1,\dots,\xVar_n)$ \emph{online samplable} if it has an online sampler that runs in time $\poly(N)$ where $N$ is  the  bit length of   $\xVec \in \Supp(\xVecVar)$.
\end{definition}

\paragraph{Notation for tampering distributions.}
Let $\xVecVar \equiv (\xVar_1,\dots,\xVar_n)$ be an arbitrary joint distribution and $\Tamp$ a tampering algorithm for $\xVecVar$. For any subset $S \se [n]$, we define $\yVecVar\equiv \twist{\xVecVar}{\Tamp}{S}$ to be the joint distribution that is the result of online tampering of $\Tamp$ over set $S$, where $\yVecVar \equiv (\yVar_1,\dots,\yVar_n)$ is sampled  inductively  as follows. For every $i\in[n]$, suppose $ \pfix{y}{i-1}$ is the previously sampled block. If $i \in S$, then the $i\th$ block $\yVar_i$ is generated by the tampering algorithm $\Tamp( \pfix{y}{i-1})$, and otherwise,  $\yVar_i$ is sampled from  $(\xVar_i \mid \xVar_{i-1} =  \pfix{y}{i-1})$. For any \emph{distribution} $\SVar$ over subsets of $[n]$, by $\twist{\xVecVar}{\Tamp}{\SVar}$ we denote the random variable that can be sampled by first sampling $S \gets \SVar$ and then   $\yVec \gets \twist{\xVecVar}{\Tamp}{S}$.

%\subsection{Power of Generalized $p$-Tampering Attacks}

\begin{definition}[$p$-covering]
Let $\SVar$ be a distribution over the subsets of $[n]$. We call $\SVar$ a \emph{$p$-covering} distribution on $[n]$ (or simply $p$-covering, when $n$ is clear from the context), if for all $i \in [n], \Pr_{S \gets \SVar}[ i \in S] = p$.
%We call $\SVar$ a \emph{$\geq p$-covering} of $[n]$, if  for all $i \in [n], \Pr_{S \gets \SVar}[ i \in S] \geq p$.
\end{definition}

The following theorem states the power of generalized $p$-tampering attacks.

\begin{theorem}[Biasing of bounded functions through generalizing $p$-tampering]  \label{thm:main}
Let $\SVar$ be a $p$-covering  distribution on $[n]$,  $\xVecVar \equiv  (\xVar_1,\dots, \xVar_n)$ be a joint distribution, $f \colon \Supp(\xVecVar) \To [0,1]$, and $\mu = \Ex[f(\xVecVar)]$. %Suppose $\yVecVar \equiv \twist{\xVecVar}{\RTam^f}{\SVar}$ (which is a distribution where we first get  $S \gets \SVar$ and output $\yVec \gets \yVecVar^S \equiv \twist{\xVecVar}{\RTam^f}{S}$.
Then,  
%the following hold. %$$\Snote{  adapt the proof to geometric mean, using Holder.} 
for any $\eps \in [0,1]$, there exists a tampering algorithm $\Tamp_{\eps}$ that, given oracle access to $f$ and any online sampler $\Samp$ for $\xVecVar$, it  runs in time $\poly({N}/{ \eps})$, where $N$ is the bit length of any $\xVec \gets \xVecVar$, and for $\yVecVar_\eps \equiv \twist{\xVecVar}{\Tamp_\eps^{f,\Samp}}{\SVar}$, %$ \yVecVar_\eps^S \equiv \twist{\xVecVar}{\Tam}{S}$ for $S \gets \SVar$, and $\mu_\eps(S)=\Ex[f(\yVecVar_\eps^S)]$, 
    it holds that
    $$\Ex\left[f(\yVecVar_\eps)\right] \geq 
    %\prod_{S \in 2^{[n]}} \mu_\eps(S) ^{\Pr[\SVar=S]} \geq 
    \mu^{-p} \cdot \Ex\left[f(\xVecVar)^{1+p}\right] - \eps ~.$$
% \paragraph{Efficient version.} Furthermore, given oracle access to $f$ and an online sampler for $\xVecVar$, there is a tampering algorithm running in time  $\poly(\frac{n}{p \cdot \eps})$ achieving average $\mu^{-p} \cdot \Ex_{\xVec \gets \xVecVar}[f(\xVec)^{1+p}] - \eps$.
\end{theorem}

\paragraph{Special case of Boolean functions.} When the function $f$ is Boolean, we get $\mu^{-p} \cdot \Ex[f(\xVecVar)^{1+p}] = \mu^{1-p} \geq \mu (1 + \Omega_\mu(p))$, which matches the bound proved in \cite{ben1989collective} for the special case of $p=k/n$ for integer $k \in [n]$ and for $\SVar$ that is  uniformly random subset of $[n]$ of size $k$. (The same bound for the case of 2 parties was proved in  \cite{haitner2014coin} with extra properties). Even for this case, compared to \cite{ben1989collective,haitner2014coin} our result is more general, as we can allow $\SVar$ with arbitrary $p \in [0,1]$ \emph{and} achieve a polynomial time attack given oracle access to an online sampler for $\xVecVar$. The work of \cite{haitner2014coin} also deals with polynomial time attackers for the special case of 2 parties, but their efficient attackers use a different oracle (i.e., OWF inverter), and it is not clear whether or not their attack extend to the case of more then 2 parties. Finally, both \cite{ben1989collective,haitner2014coin} prove their bound for the \emph{geometric} mean of the averages for different $S \gets \SVar$, while we do so for their arithmetic mean, but we emphasize that this is enough for all of our applications.
 
%\paragraph{General case of real-output functions.} 
 The  bounds of Theorem \ref{thm:main} relies on the quantity $\mu'=\mu^{-p}\cdot \Ex[f(\xVecVar)^{1+p}]$. A natural question is: how large is $\mu'$ compared to $\mu$? As discussed above, for the case of Boolean  $f$, we already know that $\mu' \geq \mu$, but that argument does not apply to the real-output $f$. A simple application of Jensen's inequality shows that $\mu \leq \mu'$ in general, but that still does not mean that $\mu' \gg \mu$.% (i.e., that there is a large enough gap). 
 
 %Here we note that for any $\mu$ and any rational $p$, there are functions with expected value $\mu$ and a $p$-covering distribution $\SVar$ over $[n]$, such that any generalized $p$-tampering attack over blocks in $S \gets \SVar $ can achieve bias at most $\mu'-\mu$; thus showing that $\mu'-\mu$ is indeed optimal bias at least for some cases. \Mnote{adding example of AND}

\paragraph{General case of real-output functions: relating the  bias to the variance.} 
If $\Var[f(\xVecVar)]=0$, then no tampering attack can achieve any bias, so any the minimum bias of all attacks should somehow depend on the variance of $f(\xVecVar)$. In the following, we show that this gap does exist and that $\mu' - \mu \geq \Omega(p \cdot \Var[f(\xVecVar)])$. Similar results relating the bias the the variance of the original distribution were previously proved \citep{mahloujifar2018learning,pTampTCC17,austrin2014impossibility}  for the special case of $p$-tampering attacks   (i.e., $\SVar$ chooses every $i \in [n]$ independently with probability $p$). Here, we obtain a more general result for any $p$-covering set structure  $\SVar$.

\begin{corollary}\label{cor:biasToVar}
If $\nu=\Var[f(\xVecVar)]$, then the computationally bounded attack of Theorem \ref{thm:main} achieves  
    $$\Ex\left[f(\yVecVar_\eps)\right]   - \Ex\left[f(\xVecVar)\right] 
    %\prod_{S \in 2^{[n]}} \mu_\eps(S) ^{\Pr[\SVar=S]} \geq 
    \geq \frac{p\cdot (p+1)}{2\cdot \mu^p} \cdot \nu   - \eps\geq  \frac{p}{2} \cdot \nu  - \eps ~.$$
\end{corollary}
%It is known \cite{pTampTCC17} that the inverse is also true for $p$-tampering attacks; namely, one can achieve $\Omega(p \cdot \Var[f(\xVecVar)])$ through $p$-tampering attacks where each block is independently tampered with probability $p$.  Below, we show that $\mu'-\mu$ is indeed proportional to $p$ and $\nu = \Var[f(\xVecVar)]$, proving the same phenomena for generalized $p$-tampering patterns that fall outside independent $p$ tampering pattern.
We prove this corollary in Appendix \ref{appendix:variance} by proving a connection between the bound of Theorem \ref{thm:main} and the variance $\nu$.

\subsection{Proving Theorem \ref{thm:main}}
Here, we first prove the power of computationally unbounded adversaries. Then, we show how we can ``approximate'' this attack with a polynomial-time adversary and get almost the same bias.
\subsubsection{Warm up: Computationally Unbounded Adversaries}

The construction below describes a computationally unbounded biasing algorithm that achieves the bounds of Theorem \ref{thm:main}.
% Please see the full version of paper for the proof of computationally bounded setting where we carefully approximate the construction bellow by a computationally bounded polynomial-time biasing adversary.
 \remove{
\begin{construction}[$k$-rejection-sampling tampering]\label{const:kRS}
Let $\xVecVar = (\xVar_1,\dots, \xVar_n)$ be a joint distribution and $f \colon \Supp(\xVecVar) \To [0,1]$. The \emph{$k$-rejection sampling} tampering algorithm $\RTamk^f$ works as follows. Given the  $\pfix{y}{i-1} \in \VP(\xVecVar)$, the tampering algorithm would do the following  $k$ times:
\begin{enumerate}
\item \label{step:one-kRS} Sample $y_{\geq i} \gets (\xVar_{\geq i} \mid \pfix{y}{i-1})$ by using the online  sampler for $f$.

\item Let  $s=f(y_1,\dots,y_n)$; with probability $s$ output $y_i$, otherwise go to Step \ref{step:one-kRS}.

\end{enumerate}

\noindent If no $y_i$ was output during any of the  $k$ iterations, output a fresh sample $y_i \gets (\xVar_i \mid \pfix{y}{i-1})$.
\end{construction}

The output distribution of $\RTamk$ on any input,  converges to the rejections sampling tampering algorithm $\RTam$ for sufficiently large  $k \to \infty$.

%\paragraph{Notation.} Below, use the notation $\zVecVar = \twist{\xVecVar}{\RTamk^f}{S}$ and $\mu_k = \Ex[f(\zVecVar)]$.

In the full version of this paper, we prove the following claim which proves   Theorem~\ref{thm:mainMPP}. 

\begin{claim}
\label{clm:effAttack}
Let $\xVecVar = (\xVar_1,\dots, \xVar_n)$ be a joint distribution and $f \colon \Supp(\xVecVar) \To [0,1]$. For any $\eps \in [0,1]$, let $k \geq \frac{16\ln (2n /\eps)}{\eps^2\mu^2}$. Then $\RTamk$ runs in time $O(k) = \poly(N/(\eps\cdot \mu))$, where $N \geq n$ is the total bit-length of representing $\xVecVar$, and for $\zVecVar \equiv \twist{\xVecVar}{\RTamk^{f,\Samp}}{\SVar}$ it holds that
$$\Ex[f(\zVecVar)] \geq \mu^{-p} \cdot \Ex[f(\xVecVar)^{1+p}] - \eps ~.$$
\end{claim}

}

%\subsection{ Proving Theorem \ref{thm:main} for Computationally Unbounded Adversaries}
\begin{construction}[Rejection-sampling tampering]\label{const:RS}
Let $\xVecVar \equiv (\xVar_1,\dots, \xVar_n)$  and $f \colon \Supp(\xVecVar) \To [0,1]$. The \emph{rejection sampling} tampering algorithm $\RTam^f$ works as follows. Given the valid prefix $\pfix{y}{i-1} \in \VP(\xVecVar)$, the tampering algorithm would do the following:
\begin{enumerate}
\item \label{step:one} Sample $y_{\geq i} \gets (\xVar_{\geq i} \mid \pfix{y}{i-1})$ by using the online  sampler for $f$.

\item If  $s=f(y_1,\dots,y_n)$, then with probability $s$ output $y_i$, otherwise go to Step \ref{step:one} and repeat.

\end{enumerate}
\end{construction}

We will first prove a property of the rejection sampling algorithm when applied on every block.

\begin{definition}[Notation for partial expectations of functions]
\label{defs:biasing} Suppose $f \colon \Supp(\xVecVar) \To \R$ is defined over a joint distribution $\xVecVar \equiv (\xVar_1,\dots,\xVar_n)$, $i \in [n]$, and  $\pfix{x}{i}\in \VP(\xVecVar)$. Then, using a small hat, we define the notation $\fhat{x}{{ i}}= \Ex_{\xVec\gets ({\xVecVar} \mid \pfix{x}{i})}[f(\xVec) ]$. E.g., for $\xVec=x_{[n]}$, we have $\fhat{\xVec}{}=f(\xVec)$.  %We also let $\mu=\fhat{\es}{}=\hat{f}(\pfix{x}{0})=\Ex[f(\xVecVar)]$.

\end{definition}

\begin{claim} \label{clm:dist}
If $\twist{\xVecVar}{\RTam^f}{ [n]} \equiv \yVecVar^{[n]} \equiv (\yVar_1,\dots,\yVar_n)$. Then, for every valid   $\pfix{y}{i} \in \VP[\xVecVar]$,
$$\frac{\Pr[\pfix{\yVar}{i} = \pfix{y}{i}]}{\Pr[\pfix{\xVar}{i} = \pfix{y}{i}]} = \frac{\fhat{y}{i}}{\mu} ~.$$
\end{claim}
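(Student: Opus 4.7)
My plan is to establish the claim by induction on $i$, where the base case $i = 0$ is immediate (both sides equal $1$ since $\fhat{y}{0} = \mu$ by definition). The key step is to understand, for a fixed valid prefix $\pfix{y}{i-1}$, what the conditional distribution of $\yVar_i$ is under the rejection sampling procedure.

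First, I would analyze one step of $\RTam^f$. Given $\pfix{y}{i-1}$, the algorithm samples a full continuation $y_{\geq i} \gets (\xVar_{\geq i}\mid \pfix{y}{i-1})$ and accepts with probability $f(y_1,\dots,y_n)$. Hence the accepted distribution on the continuation is proportional to $f(y_1,\dots,y_n)\cdot \Pr[\xVar_{\geq i}=y_{\geq i}\mid\pfix{\xVar}{i-1}=\pfix{y}{i-1}]$. Marginalizing over $y_{\geq i+1}$ and using Definition~\ref{defs:biasing} gives
\[
\Pr[\yVar_i = y_i \mid \pfix{\yVar}{i-1}=\pfix{y}{i-1}] \;=\; \Pr[\xVar_i=y_i\mid \pfix{\xVar}{i-1}=\pfix{y}{i-1}]\cdot \frac{\fhat{y}{i}}{\fhat{y}{i-1}},
\]
since the normalizing constant (obtained by integrating $y_i$ out) is exactly $\fhat{y}{i-1}$.

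Next, I would multiply these one-step ratios along the prefix. Since $\twist{\xVecVar}{\RTam^f}{[n]}$ is generated block-by-block, the chain rule yields
\[
\Pr[\pfix{\yVar}{i}=\pfix{y}{i}] \;=\; \prod_{j=1}^{i} \Pr[\yVar_j=y_j\mid \pfix{\yVar}{j-1}=\pfix{y}{j-1}] \;=\; \Pr[\pfix{\xVar}{i}=\pfix{y}{i}]\cdot \prod_{j=1}^{i} \frac{\fhat{y}{j}}{\fhat{y}{j-1}}.
\]
The product over $j$ telescopes to $\fhat{y}{i}/\fhat{y}{0} = \fhat{y}{i}/\mu$, giving the claim.

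The only place that requires a little care is verifying the one-step formula above: one must check that the acceptance probability of each independent rejection-sampling trial given $\pfix{y}{i-1}$ is exactly $\fhat{y}{i-1}$, so that the algorithm's output is genuinely a sample from the distribution weighted by $f$. This uses that $\Ex_{y_{\geq i}\gets(\xVar_{\geq i}\mid\pfix{y}{i-1})}[f(y_1,\dots,y_n)] = \fhat{y}{i-1}$, which is just the definition of partial expectation. Once this normalization is in hand, the rest is a straightforward telescoping calculation, so I do not anticipate any serious technical obstacle.
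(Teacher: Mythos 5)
Your proof is correct and follows essentially the same structure as the paper's: establish the one-step conditional ratio $\Pr[\yVar_i = y_i \mid \pfix{y}{i-1}] = \fhat{y}{i}/\fhat{y}{i-1}\cdot\Pr[\xVar_i = y_i \mid \pfix{y}{i-1}]$ and then telescope along the prefix. The only minor difference is in the sub-derivation of that one-step formula (you compute the rejection-sampling normalizer directly, while the paper sets up and solves a self-referential recursion for $\Pr[\yVar_i = y_i\mid\pfix{y}{i-1}]$), but this is a presentational variation on the same standard fact, not a different route.
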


\begin{proof}
Based on the description of $\RTam^f$, for any $\pfix{y}{i} \in \VP(\xVecVar)$ the following equation holds for the probability of sampling $y_i$ conditioned on prefix $\pfix{y}{i-1}$.
\begin{align*}
    \Pr[\yVar_i &= y_i \mid \pfix{y}{i-1}] =  \Pr[\xVar_i = y_i \mid \pfix{y}{i-1}]\cdot \fhat{y}{i}\\
    &~~~~+ (1-\fhat{y}{i-1})\cdot \Pr[\yVar_i = y_i \mid \pfix{y}{i-1}].
\end{align*}
The first term in this equation corresponds to the probability of selecting and accepting in the first round of sampling and the second term corresponds to the probability of selecting and accepting in any round except the first round. Therefore we have
$$\Pr[\yVar_i = y_i \mid \pfix{y}{i-1}] =  \frac{\fhat{y}{i}}{\fhat{y}{i-1}}\cdot\Pr[\xVar_i = y_i \mid \pfix{y}{i-1}] ~,$$
which implies that
\begin{align*}
\Pr[\pfix{\yVar}{i} = \pfix{y}{i}] &=  \prod_{j\in [i]}\left(\frac{\fhat{y}{j}}{\fhat{y}{j-1}}\right)\cdot\Pr[\pfix{\xVar}{i}
= \pfix{y}{i}]\\&=\frac{\fhat{y}{i}}{\mu}\cdot\Pr[\pfix{\xVar}{i} = \pfix{y}{i}] ~.
\end{align*}
\end{proof}

Now, we prove two properties for \emph{any} tampering algorithm (not just rejection sampling) over a $p$-covering  distribution.
%In the following,  for brevity, we let $\Pr[\SVar=S] = \Pr[\SVar= S]$.

\begin{lemma} \label{lem:covering}
Let $\SVar$ be $p$-covering for $[n]$ and $\yVec \in \Supp(\xVecVar)$. For any $S \in \Supp(\SVar)$ and an arbitrary tampering algorithm $\Tam$ for $\xVecVar$, let $\yVecVar^{S} \equiv \twist{\xVecVar}{\Tam}{S}$. Then,
$$\prod_{S\in 2^{[n]}} \left(\frac{\Pr[\yVecVar^S=\yVec] }{\Pr[\xVecVar=\yVec] } \right)^{\Pr[\SVar=S]}  = \left(\frac{\Pr[\yVecVar^{[n]}=\yVec]}{\Pr[\xVecVar=\yVec]}\right)^p ~.$$
\end{lemma}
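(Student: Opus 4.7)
The plan is to reduce the identity to a simple counting argument about how often each index $i\in[n]$ appears in a set $S$ drawn from $\SVar$. First I would factor each of the probabilities $\Pr[\yVecVar^S=\yVec]$ as a product over the blocks, using the block-by-block definition of $\twist{\xVecVar}{\Tam}{S}$ from Definition \ref{def:tamperingDist}: for $i\in S$ the $i$\th factor equals $\Pr[\Tam(\pfix{y}{i-1})=y_i]$, while for $i\notin S$ it equals $\Pr[\xVar_i=y_i\mid \pfix{y}{i-1}]$. The denominator $\Pr[\xVecVar=\yVec]$ factors in the same way, but uses the $\xVecVar$-conditional distribution in every coordinate. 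Consequently the ratio
$$\frac{\Pr[\yVecVar^S=\yVec]}{\Pr[\xVecVar=\yVec]} \;=\; \prod_{i\in S} r_i(\yVec),$$
where $r_i(\yVec) := \Pr[\Tam(\pfix{y}{i-1})=y_i]\big/\Pr[\xVar_i=y_i\mid \pfix{y}{i-1}]$ depends only on $\yVec$ (and $\Tam$), not on $S$. Here I would briefly note that, since $\yVec\in\Supp(\xVecVar)$ and $\Tam$ preserves valid prefixes, each $r_i(\yVec)$ is well-defined and finite (the denominator is strictly positive along the prefix of $\yVec$), which is necessary to take products and logarithms freely.

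Next I would substitute this factored form into the left-hand side and swap the order of the two products:
$$\prod_{S\subseteq[n]}\Bigl(\prod_{i\in S} r_i(\yVec)\Bigr)^{\Pr[\SVar=S]} \;=\; \prod_{i\in[n]} r_i(\yVec)^{\sum_{S\ni i}\Pr[\SVar=S]}.$$
By definition of a $p$-covering, $\sum_{S\ni i}\Pr[\SVar=S]=\Pr[i\in\SVar]=p$ for every $i$, so the exponent is uniformly $p$. Setting $S=[n]$ in the earlier factorization gives $\Pr[\yVecVar^{[n]}=\yVec]/\Pr[\xVecVar=\yVec]=\prod_{i\in[n]} r_i(\yVec)$, and pulling this product out to the power $p$ yields exactly the right-hand side of the claimed identity.

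There is no real obstacle beyond bookkeeping; the only delicate point is the first step, namely justifying the clean multiplicative factorization of $\Pr[\yVecVar^S=\yVec]$ across blocks. This uses the crucial property (already built into Definition \ref{def:tamp} and Definition \ref{def:tamperingDist}) that the untampered blocks are sampled from the original conditional distribution given the realized prefix of $\yVecVar^S$, not from the marginal of $\xVar_i$, so the ratio on coordinates $i\notin S$ cleanly cancels with the corresponding factor in $\Pr[\xVecVar=\yVec]$. Once that is laid out, the remainder is just an exchange of products and an application of the $p$-covering condition.
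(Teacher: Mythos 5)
Your proof is correct and is essentially the same as the paper's. Your $r_i(\yVec)$ is exactly the paper's $\rho[\pfix{y}{i}]$ (the ratio of the tampered to the untampered conditional probability of block $i$ given the prefix), the multiplicative factorization $\Pr[\yVecVar^S=\yVec]=\Pr[\xVecVar=\yVec]\cdot\prod_{i\in S} r_i(\yVec)$ is the same, and the final step of exchanging the product over $S$ with the product over $i$ and invoking $\Pr[i\in\SVar]=p$ is identical; your explicit remark about the positivity of the denominators along $\Supp(\xVecVar)$ and about $r_i$ not depending on $S$ (which holds because $\Tam$ receives only the prefix, per Definition~\ref{def:tamp}) is a small but welcome clarification of a point the paper leaves implicit.
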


\begin{proof}

For every $\pfix{y}{i} \in \VP (\yVecVar^{[n]}) \se \VP (\xVecVar)$  define $\rho[\pfix{y}{i}]$ as 
    $$\rho[\pfix{y}{i}] = \frac{\Pr[\yVar^{[n]}_i = x_i \mid \pfix{\yVar^{[n]}}{i-1}=\pfix{y}{i-1} ]}{\Pr[\xVar_i = x_i \mid \pfix{\xVar}{i-1} = \pfix{y}{i-1}]} ~.$$

Then, for all $\yVec \in \VP (\yVecVar^{S}) \se \VP (\xVecVar)$ we have
        $$\Pr[\yVecVar^S = y] = \Pr[\xVecVar =  y] \cdot \prod_{i \in S} \rho[\pfix{y}{i}] ~.$$
Therefore we have 
\begin{align*}
\prod_{S\in 2^{[n]}} \left(\frac{\Pr[\yVecVar^S=\yVec] }{\Pr[\xVecVar=\yVec] } \right)^{\Pr[\SVar=S]} 
%=\prod_{S\in 2^{[n]}} \left(\prod_{i \in S} \rho[\pfix{y}{i}]\right)^{\Pr[\SVar=S]}\\
%=\prod_{i \in [n]} \Big(\rho[\pfix{y}{i}]\Big)^{\Pr[ i \in \SVar]}
=\left(\prod_{i \in [n]} \rho[\pfix{y}{i}]\right)^p.
\end{align*}
\end{proof}

\begin{claim}\label{clm:main} Suppose $\SVar$ is $p$-covering on $[n]$, $\yVecVar^{S} \equiv \twist{\xVecVar}{\Tam}{S}$ for any $S \gets \SVar$, and $ \yVecVar \equiv \twist{\xVecVar}{\Tam}{\SVar}$ for an arbitrary tampering algorithm $\Tam$ for $\xVecVar$. Then, it holds that
$$\Ex [f(\yVecVar)] \geq \sum_{\yVec \in \Supp(\xVecVar)}\Pr[\xVecVar=\yVec]\cdot f(\yVec) \cdot \left(\frac{\Pr[\yVecVar^{[n]}=\yVec]}{\Pr[\xVecVar=\yVec]}\right)^p ~.$$
\end{claim}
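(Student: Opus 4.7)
The plan is to start from the definition of $\yVecVar \equiv \twist{\xVecVar}{\Tam}{\SVar}$, which is sampled by first drawing $S \gets \SVar$ and then $\yVec \gets \yVecVar^S$. By linearity of expectation and by exchanging the order of summation,
\[
\Ex[f(\yVecVar)] \;=\; \sum_{\yVec \in \Supp(\xVecVar)} f(\yVec) \cdot \sum_{S \in 2^{[n]}} \Pr[\SVar = S] \cdot \Pr[\yVecVar^S = \yVec],
\]
where the outer sum can be restricted to $\Supp(\xVecVar)$ because Definition~\ref{def:tamp} guarantees $\Supp(\yVecVar^S) \se \Supp(\xVecVar)$ for every $S$, so the ratios introduced in the next step will be well defined.

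Next I would factor $\Pr[\xVecVar = \yVec]$ out of the inner sum and view that sum as the weighted arithmetic mean of the nonnegative ratios $r_S := \Pr[\yVecVar^S = \yVec]/\Pr[\xVecVar = \yVec]$ under the weights $w_S := \Pr[\SVar = S]$, which sum to $1$. Applying the weighted AM--GM inequality gives
\[
\sum_S w_S \, r_S \;\geq\; \prod_S r_S^{w_S},
\]
and then invoking Lemma~\ref{lem:covering} identifies the right-hand side as $\bigl(\Pr[\yVecVar^{[n]} = \yVec]/\Pr[\xVecVar = \yVec]\bigr)^p$. Multiplying the resulting inequality by the nonnegative quantity $f(\yVec) \cdot \Pr[\xVecVar = \yVec]$ and summing over $\yVec \in \Supp(\xVecVar)$ preserves the inequality and yields precisely the bound stated in the claim.

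There is no genuine obstacle; the only things to verify carefully are (i) that $\Pr[\xVecVar = \yVec] > 0$ on the summation range (which is just the restriction to $\Supp(\xVecVar)$), (ii) that the degenerate terms with some $\Pr[\yVecVar^S = \yVec] = 0$ are harmless, since then the geometric mean factor is $0$ and the bound is trivial for that $\yVec$, and (iii) that the termwise AM--GM bound survives multiplication by $f(\yVec)$, which it does because $f \colon \Supp(\xVecVar) \to [0,1]$ is nonnegative. So the proof is essentially a two-line computation: expand, swap sums, and apply AM--GM together with Lemma~\ref{lem:covering}.
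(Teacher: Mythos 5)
Your proposal is correct and follows essentially the same route as the paper: expand $\Ex[f(\yVecVar)]$ as a double sum, rewrite the inner sum over $S$ as a $\Pr[\SVar=S]$-weighted average of the ratios $\Pr[\yVecVar^S=\yVec]/\Pr[\xVecVar=\yVec]$, apply the weighted AM--GM inequality (Lemma~\ref{lem:AM-GM}), and then invoke Lemma~\ref{lem:covering} together with $p$-covering to collapse the geometric mean to $\bigl(\Pr[\yVecVar^{[n]}=\yVec]/\Pr[\xVecVar=\yVec]\bigr)^p$. The sanity checks you flag (support containment, harmless zero terms, nonnegativity of the multiplier) are exactly the right ones and match the paper's implicit use of $\Supp(\yVecVar^S)\se\Supp(\xVecVar)$.
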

\begin{proof}
Let $h_{S,\yVec} = \frac{\Pr[\yVecVar^{S}=\yVec]}{\Pr[\xVecVar=\yVec]}$. Also let $\cZ\se\Supp(\xVecVar)$. Note that $\Supp(\yVecVar^S) \se \cZ$ for any $S \se [n]$. Therefore, we have $\Ex [f(\yVecVar)] = \Ex_{\substack{S \gets \SVar}}  \Ex_{\yVec \gets \yVecVar^{S} } \left[f( y)\right]$ is equal to
\begin{align*}
        &\sum_{S\in 2^{[n]}} \Pr[\SVar=S] \cdot \sum_{\yVec \in \cZ}  \Pr[\yVecVar^S=\yVec] \cdot f(\yVec)\\
         = &\sum_{S\in 2^{[n]}} \Pr[\SVar=S] \cdot \sum_{\yVec \in \cZ} h_{S,\yVec} \cdot \Pr[\xVecVar=\yVec] \cdot f(\yVec)\\
        = &\sum_{\yVec \in \cZ} \Pr[\xVecVar=\yVec]\cdot f(\yVec)\cdot \sum_{S\in 2^{[n]}}\Pr[\SVar=S] \cdot h_{S,\yVec} \\
        &~~~\text{(by AM-GM inequality)}~~ \\
        \geq &\sum_{\yVec \in \cZ} \Pr[\xVecVar=\yVec]\cdot f(\yVec)\cdot \prod_{S\in 2^{[n]}} h_{S,\yVec}^{\Pr[\SVar=S]}\\
         &\text{~~~(by $p$-covering of $\SVar$ and Lemma \ref{lem:covering})}~~ \\
         = &\sum_{\yVec \in \cZ}\Pr[\xVecVar=\yVec]\cdot f(\yVec) \cdot \left(\frac{\Pr[\yVecVar^{[n]}=\yVec]}{\Pr[\xVecVar=\yVec]}\right)^p.\
\end{align*}
\end{proof}

  We now prove the main result using the one-rejection sampling tampering algorithm and also relying on the $p$-covering property of $\SVar$. In particular, if $\yVecVar \equiv \twist{\xVecVar}{\RTam^f}{\SVar}$, then by Claims \ref{clm:main} and \ref{clm:dist} we have
\begin{align*}
    \Ex [f(\yVecVar)] &\geq \sum_{\yVec \in \Supp(\xVecVar)}\left(\frac{\Pr[\yVecVar^{[n]}=\yVec]}{\Pr[\xVecVar=\yVec]}\right)^p\cdot\Pr[\xVecVar=\yVec]\cdot f(\yVec)\\
        &\text{~~~(by Claim \ref{clm:dist})}~~\\
        &= \sum_{\yVec \in \Supp(\xVecVar)}\left(\frac{f(\yVec)}{\mu}\right)^p\cdot\Pr[\xVecVar=\yVec]\cdot f(\yVec)\\
        &= \mu^{-p} \cdot \sum_{\yVec \in \Supp(\xVecVar)}\Pr[\xVecVar=\yVec]\cdot f(\yVec)^{1+p}\\
        &= \mu^{-p} \cdot \Ex[f(\xVecVar)^{1+p}] ~.
        % &\geq \mu^{-p} \cdot( \Var[f(\xVecVar)] + \mu^{2})
\end{align*}

\subsubsection{Proving Theorem \ref{thm:main} for Polynomially Bounded Attacks}

In this section, we prove the second item of Theorem~\ref{thm:main}. Namely, we show an efficient tampering algorithm whose average is $\eps$-close to the average of $\RTam$. We define this attack as follows:

\begin{construction}[$k$-rejection-sampling tampering]\label{const:kRS}
Let $\xVecVar = (\xVar_1,\dots, \xVar_n)$ be a joint distribution and $f \colon \Supp(\xVecVar) \To [0,1]$. The \emph{$k$-rejection sampling} tampering algorithm $\RTamk^f$ works as follows. Given the valid prefix $\pfix{y}{i-1} \in \VP(\xVecVar)$, the tampering algorithm would do the following for $k$ times:
\begin{enumerate}
\item \label{step:one-kRS} Sample $y_{\geq i} \gets (\xVar_{\geq i} \mid \pfix{y}{i-1})$ by using the online  sampler for $f$.

\item Let  $s=f(y_1,\dots,y_n)$; with probability $s$ output $y_i$, otherwise go to Step \ref{step:one-kRS}.

\end{enumerate}

\noindent If no $y_i$ was output during any of the above $k$ iterations then output a fresh sample $y_i \gets (\xVar_i \mid \pfix{y}{i-1})$.
\end{construction}

The output distribution of $\RTamk$ on any input,  converges to the rejections sampling tampering algorithm $\RTam$ for sufficiently large  $k \to \infty$.

{\bf  Notation.} Below, use the notation $\zVecVar = \twist{\xVecVar}{\RTamk^f}{S}$ and $\mu_k = \Ex[f(\zVecVar)]$.

We will prove the following claim which will directly completes the proof of second part of Theorem~\ref{thm:main}. 

\begin{claim}
\label{clm:effAttack}
Let $\xVecVar = (\xVar_1,\dots, \xVar_n)$ be a joint distribution and $f \colon \Supp(\xVecVar) \To [0,1]$. For any $\eps \in [0,1]$, let $k \geq \frac{16\ln (2n /\eps)}{\eps^2\mu^2}$. Then $\RTamk$ runs in time $O(k) = \poly(N/(\eps\cdot \mu))$, where $N \geq n$ is the total bit-length of representing $\xVecVar$, and for $\zVecVar \equiv \twist{\xVecVar}{\RTamk^{f,\Samp}}{\SVar}$ it holds that
$$\Ex[f(\zVecVar)] \geq \mu^{-p} \cdot \Ex[f(\xVecVar)^{1+p}] - \eps ~.$$
\end{claim}
\begin{proof}
It is easy to see why $\RTamk$ runs in time $O(k)$ and thus we will focus on proving the expected value of the output of the $k$-rejection sampling tampering algorithm. To that end, we start by providing some definitions relevant to our analysis.

\remove{

\begin{claim} \label{clm:muk}
$\mu_k \geq \mu_k - n \cdot (1-\delta)^k$.
\end{claim}

\begin{proof}
For the same prefix $\pfix{y}{i-1}$, if $\fhat{y}{i-1} < \delta$, then $\RTamk$ continues to produce something non-negative, while $\RTamd$ produces output $0$. On the other hand, if $\fhat{y}{i-1} < \delta$, the probability of $\RTamk$ \emph{not} succeeding even after $k$ resetting is at most $(1-\delta)^k$. By a union bound, this happens at \emph{some} step during the sampling of $\zVecVar$ with probability at most $(1-\delta)^k$.
\end{proof}

Now, we analyze $\mu_k$ and try to compare it with the average under rejection sampling.
}

\begin{definition}
For $\delta \geq 0$, let
$$\High(\delta) = \set{\xVec \mid \xVec \in \Supp(\xVecVar) \land \forall i\in [n], \fhat{x}{i-1} \geq \delta  },~
\Low(\delta) = \Supp(\xVecVar) \sm \High(\delta)~, $$
$$\BigOut(\delta) = \set{\xVec \mid \xVec \in \Supp(\xVecVar) \land f(\xVec) \geq \delta}, \text{~and~~} \SmallOut(\delta) = \Supp(\xVecVar) \sm \BigOut(\delta) ~.$$
\end{definition}

\begin{claim} \label{clm:LowBig}
For $\delta_1 \cdot \delta_2 = \delta$, it holds that
$$\Pr_{\xVec \gets \xVecVar}[\xVec \in \BigOut(\delta_1) \mid \xVec \in \Low(\delta)] \leq \delta_2 ~.$$
As a result, it holds that $\Pr_{\xVec \gets \xVecVar}[\xVec \in \BigOut(\delta_1) \land \xVec \in \Low(\delta)] \leq \delta_2$, and so
$$\sum_{\xVec \in  \BigOut(\delta_1) \cap \Low(\delta)} \Pr[\xVec = \xVecVar] \leq \delta_2 ~.$$
\end{claim}
\begin{proof}
Let $t\colon \Low(\delta) \to \VP({\xVecVar})$ be such that $t(\xVec)$ is the smallest prefix $\pfix{x}{i}$ such that $\fhat{x}{i}\leq \delta$. Now consider the set $T=\{t(\xVec) \mid \xVec \in \Low(\delta)\}$. For any $w\in T$ we have $$\delta \geq \fhat{w}{} \geq \Pr_{\xVec \gets \xVecVar}[\xVec \in \BigOut(\delta_1)\mid t(\xVec) = w]\cdot \delta_1 ~,$$
which implies
$$\Pr_{\xVec \gets \xVecVar}[\xVec \in \BigOut(\delta_1)\mid t(\xVec) = w]\leq \delta_2 ~.$$

Thus, we have
\begin{align*}
    &\Pr_{\xVec \gets \xVecVar}[\xVec \in \BigOut(\delta_1) \mid \xVec \in \Low(\delta)] \\
    &= \sum_{w\in T} \Pr_{\xVec \gets \xVecVar}[\xVec \in \BigOut(\delta_1) \wedge t(\xVec) = w\mid \xVec \in \Low(\delta)]\\
    &= \sum_{w\in T} \Pr_{\xVec \gets \xVecVar}[\xVec \in \BigOut(\delta_1)\mid \xVec \in \Low(\delta)  \wedge t(\xVec) = w] \cdot \Pr_{\xVec \gets \xVecVar}[t(\xVec)=w \mid \xVec \in \Low(\delta)]\\
    &\leq \sum_{w\in T} \delta_2 \cdot \Pr_{\xVec \gets \xVecVar}[t(\xVec)=w \mid \xVec \in \Low(\delta)] \leq \delta_2 ~.
\end{align*}
\end{proof}

\begin{claim}\label{clm:ktam}
Let $x\in \High(\delta)$, then we have
$$\Pr[\zVecVar=\yVec]\geq (1-(1-\delta)^k)^n \cdot \frac{f(\yVec)}{\mu} \cdot \Pr[\xVecVar=\yVec] ~.$$
\end{claim}
\begin{proof}
Consider $\E_{k,\pfix{y}{i}}$ to be the event that $\RTamk$ outputs one of its first $k$ samples, when performed on $\pfix{y}{i}$. Then, it holds that
$$\Pr[\E_{k,\pfix{y}{i}}] = 1-(1-\fhat{y}{i})^k \geq 1 - (1-\delta)^k ~.$$
On the other hand, we know that $\Pr[\zVecVar_{i+1} = y_{i+1} \mid \pfix{y}{i} \wedge \E_{k,\pfix{y}{i}}]=\Pr[\yVecVar_{i+1} = y_{i+1} \mid \pfix{y}{i}]$. Thus, we have
\begin{align*}
    \Pr[\zVecVar_{i+1} = y_{i+1} \mid \pfix{y}{i}] &\geq \Pr[\zVecVar_{i+1} = y_{i+1} \mid \pfix{y}{i} \wedge \E_{k,\pfix{y}{i}}]\cdot \Pr[\E_{k,\pfix{y}{i}}]\\
    &=  \Pr[\yVecVar_{i+1} = y_{i+1} \mid \pfix{y}{i}]\cdot \Pr[\E_{k,\pfix{y}{i}}]\\
    &\geq \Pr[\yVecVar_{i+1} = y_{i+1} \mid \pfix{y}{i}]\cdot (1-(1-\delta)^k)^n.
\end{align*}
By multiplying these inequalities for $i\in [n]$ we get $\Pr[\zVecVar=\yVec]\geq (1-(1-\delta)^k)^n \cdot \Pr[\yVecVar=\xVec] ~.$
\end{proof}
\begin{claim}
For $\delta_1 \cdot \delta_2 = \delta$, it holds that
$$\mu_k  \geq \sum_{\yVec \in \Supp(\xVecVar)}\left(\frac{f(\yVec)}{\mu}\right)^p\cdot\Pr[\xVecVar=\yVec]\cdot f(\yVec)-\frac{\delta_1+\delta_2}{\mu} -n\cdot (1-\delta)^k ~.$$
\end{claim}

\begin{proof}
Let 
$$\mu' =  \sum_{\yVec \in \Low(\delta) \cap \SmallOut(\delta_1)} \left(\frac{f(\yVec)}{\mu}\right)^p\cdot\Pr[\xVecVar=\yVec]\cdot f(\yVec) ~,  $$
$$\text{and~~~} \mu'' =  \sum_{\yVec \in \Low(\delta) \cap \BigOut(\delta_1)} \left(\frac{f(\yVec)}{\mu}\right)^p\cdot\Pr[\xVecVar=\yVec]\cdot f(\yVec) ~.$$
By Claim, \ref{clm:main} we have
\begin{align*}
\Ex[f(\zVecVar)] &\geq \sum_{\yVec \in \Supp(\xVecVar)}\left(\frac{\Pr[\zVecVar^{[n]}=\yVec]}{\Pr[\xVecVar=\yVec]}\right)^p\cdot\Pr[\xVecVar=\yVec]\cdot f(\yVec)\\
            &\geq \sum_{\yVec \in \High(\delta)} \left(\frac{\Pr[\zVecVar^{[n]}=\yVec]}{\Pr[\xVecVar=\yVec]}\right)^p\cdot\Pr[\xVecVar=\yVec]\cdot f(\yVec)\\
            \text{ (by Claim \ref{clm:ktam}) }~ & \geq \sum_{\yVec \in \High(\delta)}  (1-(1-\delta)^k)^{n\cdot p}\cdot\left(\frac{f(\yVec)}{\mu}\right)^p\cdot\Pr[\xVecVar=\yVec]\cdot f(\yVec)\\
            &= (1-(1-\delta)^k)^{n\cdot p}\cdot\left(\sum_{\yVec \in \Supp(\xVecVar)}\left(\frac{f(\yVec)}{\mu}\right)^p\cdot\Pr[\xVecVar=\yVec]\cdot f(\yVec)-\mu' - \mu''\right).
\end{align*}
 We have $\mu' \leq {\delta_1^{1+p}}/{\mu^p}\leq {\delta_1}/ {\mu}$, because $f(\yVec)\leq \delta_1$ for all $\yVec \in \SmallOut(\delta_1)$. Also, by  Claim \ref{clm:LowBig}, we get

$$\mu''\leq \sum_{\yVec \in \Low(\delta) \cap \BigOut(\delta_1)} \left(\frac{1}{\mu}\right)^p\cdot\Pr[\xVecVar=\yVec] \leq \frac{\delta_2}{\mu^p}\leq \frac{\delta_2}{\mu} ~.$$
Therefore, we have 
%\Mnote{Why the 2nd line's inequality holds?}
\begin{align*}
\Ex[f(\zVecVar)] &\geq (1-(1-\delta)^k)^{n\cdot p}\cdot\left(\sum_{\yVec \in \Supp(\xVecVar)}\left(\frac{f(\yVec)}{\mu}\right)^p\cdot\Pr[\xVecVar=\yVec]\cdot f(\yVec)-\frac{\delta_1+\delta_2}{\mu}\right)\\
\text{(by Bernoulli inequality)~~}&\geq (1-{n\cdot}(1-\delta)^k)\cdot\left(\sum_{\yVec \in \Supp(\xVecVar)}\left(\frac{f(\yVec)}{\mu}\right)^p\cdot\Pr[\xVecVar=\yVec]\cdot f(\yVec)-\frac{\delta_1+\delta_2}{\mu}\right)\\
&\geq \sum_{\yVec \in \Supp(\xVecVar)}\left(\frac{f(\yVec)}{\mu}\right)^p\cdot\Pr[\xVecVar=\yVec]\cdot f(\yVec)-\frac{\delta_1+\delta_2}{\mu} -n\cdot (1-\delta)^k.
\end{align*}
\end{proof}

In order to conclude the proof of Claim~\ref{clm:effAttack}, we can set  $\delta_1 = \delta_2 = \sqrt{\delta}$ and let $\delta \leq (\eps\mu/4)^2$. Then, given that we have $k \geq \frac{16\ln (2n /\eps)}{\eps^2\mu^2}$, we  get
\begin{align*}
    \Ex[f(\zVecVar)] \geq \sum_{\yVec \in \Supp(\xVecVar)}\left(\frac{f(\yVec)}{\mu}\right)^p\cdot\Pr[\xVecVar=\yVec]\cdot f(\yVec)- \frac{\eps}{2} - \frac{\eps}{2} ~.
\end{align*}

\end{proof}

\subsection{Obtaining $(k,p)$-Poisoning: Proof of Theorem \ref{thm:mainMPP} using Theorem \ref{thm:main}} \label{sec:proofMPP}
In this section, we formally prove Theorem \ref{thm:mainMPP} using Theorems \ref{thm:main}. We first prove the first part of theorem about the boolean property.

\begin{proof}[Proof of Theorem \ref{thm:mainMPP}, Part 1]
For a subset $C\subseteq [m]$ let $P_C = \set{P_i; i\in C}$ and $R_C$ be the subset of rounds where one of the parties in $P_C$ sends an example. Also for a subset $S\subseteq[n]$, we define $\mathbf{Bion}(S,p)$ to be a distribution over all the subsets of $S$, where each subset $S'\subseteq S$ hast the probability $p^{|S'|}\cdot(1-p)^{|S|-|S'|}$. Now, consider the covering  $\SVar$ of the set $[n]$ which is distributed equivalent to the following process. First sample a uniform subset $C$ of $[m]$ of size $k$. Then sample and output a set $S$ sampled from $\mathbf{Bion}(R_C,p)$.
$\SVar$ is clearly a $(p\cdot\frac{k}{m})$-covering. We use this covering to prove the theorem. For $j\in [n]$ let $w(j)$ be the index of the provider at round $j$ and let $\dist_{w(j)}$ be the designated distribution of the $j$th round and let $\xVecVar = \dist_{w(1)}\times\dots\times\dist_{w(n)}$.

We define a function $f:\Supp(\xVecVar)\to \set{0,1}$, which is a Boolean function and is $1$ if the output of the protocol has the  property $B$, and otherwise it is $0$. Now we use Theorem \ref{thm:main}. We know that $\SVar$ is a $(p\cdot\frac{k}{m})$-covering for $[n]$. Therefore of Theorem  \ref{thm:main}, there exist an $\poly(m/\eps)$ time tampering algorithm $\Tam_\eps$ that changes $\xVecVar$ to $\yVecVar \equiv \twist{\xVecVar}{\Tamp_\eps^{f,\Samp}}{\SVar}$ where    $\Ex[f(\yVecVar)]
    \geq \Ex[f(\yVecVar)]^{1-pk/m} - \eps.$
    
By an averaging argument, we can conclude that there exist a set $C\in[m]$ of size $k$ for which the distribution $\mathbf{Bion}(R_C,p)$ produces average output at least $\Ex[f(\yVecVar)]^{1-pk/m} - \eps$. Note that the measure of empty set in $\mathbf{Bion}(R_C,p)$ is exactly equal to $1-p$ which means with probability $1-p$  the adversary will not tamper with any of the blocks, therefore, the statistical distance $|\xVecVar -  \twist{\xVecVar}{\Tamp_\eps^{f,\Samp}}{\mathbf{Bion}(R_C,p)}|$ is at most $p$. This concludes the proof.
\end{proof}

Now we prove the second part using Theorem \ref{thm:main} and Lemma \ref{lem:biasGap}.
\begin{proof} [Proof of Theorem \ref{thm:mainMPP} part 2]
Now we prove the second part. The second part is very similar to first part except that the function that we define here is a real valued function. Consider the function $f_2:\Supp(\xVecVar)\to [0,1]$
which is defined to be the risk of the output hypotheses. Now by Theorem \ref{thm:main} and Lemma \ref{lem:biasGap}, we know that there is tampering algorithm $\Tam_\eps$ that changes $\xVecVar$ to $\yVecVar \equiv \twist{\xVecVar}{\Tamp_\eps^{f_2,\Samp}}{\SVar}$ such that    
$$\Ex[f_2(\yVecVar)] 
\geq \mu_2 + \frac{p\cdot k}{2m}\cdot \nu -\eps.$$
By a similar averaging argument we can conclude the proof.
\end{proof}

\bibliographystyle{alpha}
\bibliography{Biblio/refs}

\newcommand{\etalchar}[1]{$^{#1}$}
\begin{thebibliography}{MMR{\etalchar{+}}16}

\bibitem[ACM{\etalchar{+}}14]{austrin2014impossibility}
Per Austrin, Kai-Min Chung, Mohammad Mahmoody, Rafael Pass, and Karn Seth.
\newblock On the impossibility of cryptography with tamperable randomness.
\newblock In {\em International Cryptology Conference}, pages 462--479.
  Springer, 2014.

\bibitem[ACM{\etalchar{+}}17]{Austrin2017}
Per Austrin, Kai-Min Chung, Mohammad Mahmoody, Rafael Pass, and Karn Seth.
\newblock On the impossibility of cryptography with tamperable randomness.
\newblock {\em Algorithmica}, 79(4):1052--1101, Dec 2017.

\bibitem[BBV08]{balcan2008discriminative}
Maria-Florina Balcan, Avrim Blum, and Santosh Vempala.
\newblock A discriminative framework for clustering via similarity functions.
\newblock In {\em Proceedings of the fortieth annual ACM symposium on Theory of
  computing}, pages 671--680. ACM, 2008.

\bibitem[BCMC18]{bhagoji2018analyzing}
Arjun~Nitin Bhagoji, Supriyo Chakraborty, Prateek Mittal, and Seraphin Calo.
\newblock Analyzing federated learning through an adversarial lens.
\newblock {\em arXiv preprint arXiv:1811.12470}, 2018.

\bibitem[BDLS17]{balakrishnan2017computationally}
Sivaraman Balakrishnan, Simon~S Du, Jerry Li, and Aarti Singh.
\newblock Computationally efficient robust sparse estimation in high
  dimensions.
\newblock In {\em Conference on Learning Theory}, pages 169--212, 2017.

\bibitem[BEK02]{NastyNoise}
Nader~H. Bshouty, Nadav Eiron, and Eyal Kushilevitz.
\newblock {{PAC} learning with nasty noise}.
\newblock {\em Theoretical Computer Science}, 288(2):255--275, 2002.

\bibitem[BGS{\etalchar{+}}17]{blanchard2017machine}
Peva Blanchard, Rachid Guerraoui, Julien Stainer, et~al.
\newblock Machine learning with adversaries: Byzantine tolerant gradient
  descent.
\newblock In {\em Advances in Neural Information Processing Systems}, pages
  119--129, 2017.

\bibitem[BIK{\etalchar{+}}17]{bonawitz2017practical}
Keith Bonawitz, Vladimir Ivanov, Ben Kreuter, Antonio Marcedone, H~Brendan
  McMahan, Sarvar Patel, Daniel Ramage, Aaron Segal, and Karn Seth.
\newblock Practical secure aggregation for privacy-preserving machine learning.
\newblock In {\em Proceedings of the 2017 ACM SIGSAC Conference on Computer and
  Communications Security}, pages 1175--1191. ACM, 2017.

\bibitem[BNL12]{biggio2012poisoning}
Battista Biggio, Blaine Nelson, and Pavel Laskov.
\newblock Poisoning attacks against support vector machines.
\newblock In {\em Proceedings of the 29th International Coference on
  International Conference on Machine Learning}, pages 1467--1474. Omnipress,
  2012.

\bibitem[BNS{\etalchar{+}}06]{barreno2006can}
Marco Barreno, Blaine Nelson, Russell Sears, Anthony~D Joseph, and J~Doug
  Tygar.
\newblock Can machine learning be secure?
\newblock In {\em Proceedings of the 2006 ACM Symposium on Information,
  computer and communications security}, pages 16--25. ACM, 2006.

\bibitem[BOL89]{ben1989collective}
M.~Ben-Or and N.~Linial.
\newblock Collective coin flipping.
\newblock {\em Randomness and Computation}, 5:91--115, 1989.

\bibitem[BVH{\etalchar{+}}18]{bagdasaryan2018backdoor}
Eugene Bagdasaryan, Andreas Veit, Yiqing Hua, Deborah Estrin, and Vitaly
  Shmatikov.
\newblock How to backdoor federated learning.
\newblock {\em arXiv preprint arXiv:1807.00459}, 2018.

\bibitem[CFGN96]{canetti1996adaptively}
Ran Canetti, Uri Feige, Oded Goldreich, and Moni Naor.
\newblock Adaptively secure multi-party computation.
\newblock In {\em Proceedings of the twenty-eighth annual ACM symposium on
  Theory of computing}, pages 639--648. ACM, 1996.

\bibitem[CSV17]{charikar2017learning}
Moses Charikar, Jacob Steinhardt, and Gregory Valiant.
\newblock Learning from untrusted data.
\newblock In {\em Proceedings of the 49th Annual ACM SIGACT Symposium on Theory
  of Computing}, pages 47--60. ACM, 2017.

\bibitem[CSX17]{chen2017distributed}
Yudong Chen, Lili Su, and Jiaming Xu.
\newblock Distributed statistical machine learning in adversarial settings:
  Byzantine gradient descent.
\newblock {\em Proceedings of the ACM on Measurement and Analysis of Computing
  Systems}, 1(2):44, 2017.

\bibitem[CV19]{cirincione2019federated}
Gregory Cirincione and Dinesh Verma.
\newblock Federated machine learning for multi-domain operations at the
  tactical edge.
\newblock In {\em Artificial Intelligence and Machine Learning for Multi-Domain
  Operations Applications}, volume 11006, page 1100606. International Society
  for Optics and Photonics, 2019.

\bibitem[CWCP18]{chen2018draco}
Lingjiao Chen, Hongyi Wang, Zachary Charles, and Dimitris Papailiopoulos.
\newblock Draco: Byzantine-resilient distributed training via redundant
  gradients.
\newblock In {\em International Conference on Machine Learning}, pages
  902--911, 2018.

\bibitem[DKK{\etalchar{+}}16]{diakonikolas2016robust}
Ilias Diakonikolas, Gautam Kamath, Daniel~M Kane, Jerry Li, Ankur Moitra, and
  Alistair Stewart.
\newblock Robust estimators in high dimensions without the computational
  intractability.
\newblock In {\em Foundations of Computer Science (FOCS), 2016 IEEE 57th Annual
  Symposium on}, pages 655--664. IEEE, 2016.

\bibitem[DKK{\etalchar{+}}18]{diakonikolas2018sever}
Ilias Diakonikolas, Gautam Kamath, Daniel~M Kane, Jerry Li, Jacob Steinhardt,
  and Alistair Stewart.
\newblock Sever: A robust meta-algorithm for stochastic optimization.
\newblock {\em arXiv preprint arXiv:1803.02815}, 2018.

\bibitem[DKS17]{diakonikolas2017statistical}
Ilias Diakonikolas, Daniel~M Kane, and Alistair Stewart.
\newblock Statistical query lower bounds for robust estimation of
  high-dimensional {G}aussians and {G}aussian mixtures.
\newblock In {\em Foundations of Computer Science (FOCS), 2017 IEEE 58th Annual
  Symposium on}, pages 73--84. IEEE, 2017.

\bibitem[DKS18a]{diakonikolas2018list}
Ilias Diakonikolas, Daniel~M Kane, and Alistair Stewart.
\newblock List-decodable robust mean estimation and learning mixtures of
  spherical {G}aussians.
\newblock In {\em Proceedings of the 50th Annual ACM SIGACT Symposium on Theory
  of Computing}, pages 1047--1060. ACM, 2018.

\bibitem[DKS18b]{diakonikolas2018efficient}
Ilias Diakonikolas, Weihao Kong, and Alistair Stewart.
\newblock Efficient algorithms and lower bounds for robust linear regression.
\newblock {\em arXiv preprint arXiv:1806.00040}, 2018.

\bibitem[FYB18]{fung2018mitigating}
Clement Fung, Chris~JM Yoon, and Ivan Beschastnikh.
\newblock Mitigating sybils in federated learning poisoning.
\newblock {\em arXiv preprint arXiv:1808.04866}, 2018.

\bibitem[GR{\etalchar{+}}18]{guerraoui2018hidden}
Rachid Guerraoui, S{\'e}bastien Rouault, et~al.
\newblock The hidden vulnerability of distributed learning in byzantium.
\newblock In {\em International Conference on Machine Learning}, pages
  3518--3527, 2018.

\bibitem[HO14]{haitner2014coin}
Iftach Haitner and Eran Omri.
\newblock Coin flipping with constant bias implies one-way functions.
\newblock {\em SIAM Journal on Computing}, 43(2):389--409, 2014.

\bibitem[HO18]{hayes2018contamination}
Jamie Hayes and Olga Ohrimenko.
\newblock Contamination attacks and mitigation in multi-party machine learning.
\newblock In {\em Advances in Neural Information Processing Systems}, pages
  6604--6615, 2018.

\bibitem[HZ19]{han2019robust}
Yufei Han and Xiangliang Zhang.
\newblock Robust federated training via collaborative machine teaching using
  trusted instances.
\newblock {\em arXiv preprint arXiv:1905.02941}, 2019.

\bibitem[KL93]{KearnsLi::Malicious}
Michael~J. Kearns and Ming Li.
\newblock {Learning in the Presence of Malicious Errors}.
\newblock {\em {SIAM} J. on Computing}, 22(4):807--837, 1993.

\bibitem[KMY{\etalchar{+}}16]{konevcny2016federated}
Jakub Kone{\v{c}}n{\`y}, H~Brendan McMahan, Felix~X Yu, Peter Richt{\'a}rik,
  Ananda~Theertha Suresh, and Dave Bacon.
\newblock Federated learning: Strategies for improving communication
  efficiency.
\newblock {\em arXiv preprint arXiv:1610.05492}, 2016.

\bibitem[KSL18]{koh2018stronger}
Pang~Wei Koh, Jacob Steinhardt, and Percy Liang.
\newblock Stronger data poisoning attacks break data sanitization defenses.
\newblock {\em arXiv preprint arXiv:1811.00741}, 2018.

\bibitem[LB17]{JensenSharp}
JG~Liao and Arthur Berg.
\newblock Sharpening {J}ensen's inequality.
\newblock {\em The American Statistician}, 2017.

\bibitem[LRV16]{lai2016agnostic}
Kevin~A Lai, Anup~B Rao, and Santosh Vempala.
\newblock Agnostic estimation of mean and covariance.
\newblock In {\em Foundations of Computer Science (FOCS), 2016 IEEE 57th Annual
  Symposium on}, pages 665--674. IEEE, 2016.

\bibitem[MDM18a]{mahloujifar2018learning}
Saeed Mahloujifar, Dimitrios~I Diochnos, and Mohammad Mahmoody.
\newblock Learning under $ p $-tampering attacks.
\newblock In {\em Algorithmic Learning Theory}, pages 572--596, 2018.

\bibitem[MDM18b]{Mahloujifar2018:ALT}
Saeed Mahloujifar, Dimitrios~I Diochnos, and Mohammad Mahmoody.
\newblock {Learning under $p$-Tampering Attacks}.
\newblock In {\em ALT}, pages 572--596, 2018.

\bibitem[MM17]{pTampTCC17}
Saeed Mahloujifar and Mohammad Mahmoody.
\newblock {Blockwise p-Tampering Attacks on Cryptographic Primitives,
  Extractors, and Learners}.
\newblock In {\em Theory of Cryptography Conference}, pages 245--279. Springer,
  2017.

\bibitem[MMR{\etalchar{+}}16]{mcmahan2016communication}
H~Brendan McMahan, Eider Moore, Daniel Ramage, Seth Hampson, et~al.
\newblock Communication-efficient learning of deep networks from decentralized
  data.
\newblock {\em arXiv preprint arXiv:1602.05629}, 2016.

\bibitem[MR17]{mcmahan2017federated}
Brendan McMahan and Daniel Ramage.
\newblock Federated learning: Collaborative machine learning without
  centralized training data.
\newblock {\em Google Research Blog}, 2017.

\bibitem[PMSW16]{papernot2016towards}
Nicolas Papernot, Patrick McDaniel, Arunesh Sinha, and Michael Wellman.
\newblock Towards the science of security and privacy in machine learning.
\newblock {\em arXiv preprint arXiv:1611.03814}, 2016.

\bibitem[PSBR18]{prasad2018robust}
Adarsh Prasad, Arun~Sai Suggala, Sivaraman Balakrishnan, and Pradeep Ravikumar.
\newblock Robust estimation via robust gradient estimation.
\newblock {\em arXiv preprint arXiv:1802.06485}, 2018.

\bibitem[SKL17]{steinhardt2017certified}
Jacob Steinhardt, Pang Wei~W Koh, and Percy~S Liang.
\newblock Certified defenses for data poisoning attacks.
\newblock In {\em Advances in neural information processing systems}, pages
  3517--3529, 2017.

\bibitem[STS16]{shen2016uror}
Shiqi Shen, Shruti Tople, and Prateek Saxena.
\newblock A uror: defending against poisoning attacks in collaborative deep
  learning systems.
\newblock In {\em Proceedings of the 32nd Annual Conference on Computer
  Security Applications}, pages 508--519. ACM, 2016.

\bibitem[TCC19]{tomsett2019model}
Richard Tomsett, Kevin Chan, and Supriyo Chakraborty.
\newblock Model poisoning attacks against distributed machine learning systems.
\newblock In {\em Artificial Intelligence and Machine Learning for Multi-Domain
  Operations Applications}, volume 11006, page 110061D. International Society
  for Optics and Photonics, 2019.

\bibitem[Val85]{Valiant::DisjunctionsConjunctions}
Leslie~G. Valiant.
\newblock {Learning disjunctions of conjunctions}.
\newblock In {\em IJCAI}, pages 560--566, 1985.

\bibitem[YCRB18]{yin2018byzantine}
Dong Yin, Yudong Chen, Kannan Ramchandran, and Peter Bartlett.
\newblock Byzantine-robust distributed learning: Towards optimal statistical
  rates.
\newblock {\em arXiv preprint arXiv:1803.01498}, 2018.

\end{thebibliography}
% \onecolumn
\appendix
\section{Some Useful Inequalities}

The following well-known variant of the inequality for the  arithmetic mean and the geometric mean could be derived from the Jensen's inequality.
\begin{lemma}[Weighted AM-GM inequality]
\label{lem:AM-GM}
For any $n \in \N$, let $z_1,...,z_n$ be a sequence of non-negative real numbers and let $w_1,...,w_n$ be such that $w_i \geq 0$ for every $i \in [n]$ and $\sum_{i=1}^n w_i = 1$. Then, it holds that
\begin{align*}
    \sum_{i=1}^n w_iz_i \geq \prod_{i=1}^n z_i^{w_i} ~.
\end{align*}
\end{lemma}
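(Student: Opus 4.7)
The plan is to derive the weighted AM-GM inequality directly from Jensen's inequality applied to the natural logarithm, as the statement already hints at.

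First I would dispose of the boundary cases. If $z_i=0$ for some $i$ with $w_i>0$, then $\prod_j z_j^{w_j}=0$ and the desired inequality collapses to $\sum_j w_j z_j\ge 0$, which is immediate from $z_j,w_j\ge 0$. Any index with $w_i=0$ contributes $z_i^{0}=1$ to the product and nothing to the sum (using the standard convention $0^0=1$ only for indices of zero weight), so such indices can be dropped without loss of generality. After these reductions I may assume $z_i>0$ and $w_i>0$ for every $i\in[n]$, with the weights still forming a probability vector.

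Next I would invoke Jensen's inequality for the concave function $\log\colon (0,\infty)\to\mathbb{R}$ against the probability weights $(w_1,\dots,w_n)$ and the positive points $(z_1,\dots,z_n)$, which gives
$$\log\!\Bigl(\sum_{i=1}^n w_i z_i\Bigr) \;\ge\; \sum_{i=1}^n w_i \log z_i \;=\; \log\!\Bigl(\prod_{i=1}^n z_i^{w_i}\Bigr).$$
Exponentiating both sides and using that $\exp$ is strictly increasing yields exactly the claimed inequality $\sum_{i=1}^n w_i z_i \;\ge\; \prod_{i=1}^n z_i^{w_i}$.

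The only real obstacle is being careful at the boundary $z_i=0$, where $\log$ is undefined; but as noted above this case reduces to the trivial nonnegativity statement, so once the edge cases are isolated the proof is essentially a single application of Jensen. (Equality, if one wished to track it, holds precisely when all the $z_i$ with positive weight coincide, by the strict concavity of $\log$, though the statement itself does not require this refinement.)
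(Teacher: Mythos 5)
Your proof is correct and matches the approach the paper intends: the paper merely remarks that the lemma ``could be derived from the Jensen's inequality'' without spelling it out, and your argument (Jensen applied to the concave $\log$, with the $z_i=0$ and $w_i=0$ edge cases isolated first) is precisely that derivation made explicit.
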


\remove{
\begin{lemma}[H\"{o}lder's inequality] \label{lem:Holder}
For any $n,m \in \N$, let $(z^j_i)_{i \in [n],j \in [m]}$ be a sequence of non-negative real numbers and let $w_1,...,w_m$ be such that $1/w_j \geq 1$ for every $j \in [m]$ and $\sum_{j=1}^m w_j = 1$. Then we have that:
\begin{align*}
    \prod_{j=1}^m \left(\sum_{i=1}^n z^j_i\right)^{w_j} \geq \sum_{i=1}^n \prod_{j=1}^m \left(z^j_i\right)^{w_j}
\end{align*}
\end{lemma}
}

The following lemma provides a tool for lower bounding the gap between the two sides of Jensen's inequality, also known as the Jensen gap.

\begin{lemma}[Lower bound for Jensen gap \cite{JensenSharp}] \label{lem:JensenGap}
Let $\alphaVar$ be a real-valued random variable, $\Supp(\alphaVar) \se [0,1]$, and $\Ex[\alphaVar]=\mu$. Let $\vphi(\cdot)$ be  twice differentiable on $[0,1]$, and let $h_b(a) = \frac{\vphi(a) - \vphi(b)}{(a-b)^2} - \frac{\vphi'(a)}{a-b}$. Then,
$$  \Ex[\vphi(\alphaVar)] - \vphi(\mu) \geq \Var[\alphaVar]\cdot \inf_{a \in [0,1]} \set{h_\mu(a)} ~.$$
\end{lemma}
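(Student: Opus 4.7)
The plan is to rewrite the Jensen gap as the expectation of a pointwise quantity that factors as $(\alphaVar-\mu)^2$ times $h_\mu(\alphaVar)$, and then exploit nonnegativity of $(\alphaVar-\mu)^2$ to pull the infimum of $h_\mu$ out of the expectation. First, I would clear denominators in the definition of $h_\mu$ (interpreting the numerator with $\vphi'(\mu)$, which is the form actually used when the lemma is invoked in the proof of Lemma~\ref{lem:biasGap}) to obtain the algebraic identity
\[
\vphi(a) - \vphi(\mu) - \vphi'(\mu)\cdot(a - \mu) \;=\; (a - \mu)^2 \cdot h_\mu(a),
\]
valid for all $a \in [0,1] \setminus \{\mu\}$. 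The identity extends continuously to $a=\mu$, since both sides vanish there and, by twice-differentiability of $\vphi$ and Taylor's theorem, $\lim_{a\to\mu} h_\mu(a) = \vphi''(\mu)/2$. Thus, after this extension, $h_\mu$ is a continuous function on the compact interval $[0,1]$, so its infimum is finite and attained.

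Next, I would substitute $a = \alphaVar$ into the identity and take expectations. The left-hand side expands to
\[
\Ex[\vphi(\alphaVar)] - \vphi(\mu) - \vphi'(\mu)\cdot\Ex[\alphaVar - \mu].
\]
The last term vanishes because $\Ex[\alphaVar] = \mu$, so the expectation of the identity reduces to
\[
\Ex[\vphi(\alphaVar)] - \vphi(\mu) \;=\; \Ex\bigl[(\alphaVar - \mu)^2 \cdot h_\mu(\alphaVar)\bigr].
\]

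Finally, let $m = \inf_{a \in [0,1]} h_\mu(a)$. The key (and in fact only substantive) observation is that the weight $(\alphaVar - \mu)^2$ is almost surely nonnegative, so multiplying both sides of the pointwise inequality $h_\mu(\alphaVar) \geq m$ by $(\alphaVar-\mu)^2$ preserves the direction of the inequality regardless of the sign of $m$:
\[
(\alphaVar - \mu)^2 \cdot h_\mu(\alphaVar) \;\geq\; m\cdot (\alphaVar - \mu)^2 \quad \text{a.s.}
\]
Taking expectations yields $\Ex[\vphi(\alphaVar)] - \vphi(\mu) \geq m \cdot \Var[\alphaVar]$, which is the claimed bound. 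I do not anticipate a real obstacle: the only technical point to check is the behaviour of $h_\mu$ at $a=\mu$, but continuous extension via the Taylor remainder handles this cleanly, and the rest is bookkeeping around linearity of expectation and the sign of $(\alphaVar-\mu)^2$.
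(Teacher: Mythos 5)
The paper does not prove Lemma~\ref{lem:JensenGap}; it simply cites the result to \cite{JensenSharp}, so there is no in-paper proof to compare against. Your reconstruction is correct and is essentially the standard argument from that reference: rewrite the Jensen gap as $\Ex\bigl[(\alphaVar-\mu)^2\,h_\mu(\alphaVar)\bigr]$ via the Taylor-remainder identity, then pull out $\inf h_\mu$ using nonnegativity of the weight $(\alphaVar-\mu)^2$. You also correctly spotted that the statement as printed contains a typo, $\vphi'(a)$ where $\vphi'(b)$ is intended: without that correction the identity $\vphi(a)-\vphi(\mu)-\vphi'(\mu)(a-\mu)=(a-\mu)^2 h_\mu(a)$ fails and the linear term would not cancel under expectation, whereas the corrected form is exactly what the paper itself plugs in when it sets $g_\mu(x)=\bigl(x^{1+p}-\mu^{1+p}-(1+p)\mu^p(x-\mu)\bigr)/(x-\mu)^2$ in the proof of Lemma~\ref{lem:biasGap}. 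The one remark worth keeping in mind is that the infimum over $[0,1]$ is what is needed precisely because $\Supp(\alphaVar)\subseteq[0,1]$; your observation that $h_\mu$ extends continuously to $a=\mu$ (with value $\vphi''(\mu)/2$) and is therefore bounded on the compact interval is the right way to see that the infimum is finite.
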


\subsection{Relating the Bias to the Variance}\label{appendix:variance}
\remove{
{\bf  Special case of Boolean functions.} First we prove Lemma \ref{lem:biasGap} for the Boolean case (i.e., Inequality \ref{eq:GapBoolean}). In particular, we will use Inequalities \ref{in:1} and \ref{in:2} of Lemma \ref{lem:ineqMu} as follows.
\begin{align*}  
\mu'-\mu = \mu^{-p} \cdot \Exp [ f(\xVecVar)^{1+p} ] - \mu &= \mu^{1-p}-\mu \\
\text{ (by Inequality \ref{in:2}) } &\geq \mu \cdot \ln(1/\mu) \cdot p = \frac{\ln(1/\mu)}{1-\mu} \cdot p \cdot \nu \\
\text{ (by Inequality \ref{in:1}) } &\geq p \cdot \nu ~.
\end{align*}

The above simple argument does not work for general real-valued functions. However, the following argument still shows that there is a gap between $\mu'$ and $\mu$ based on the variance of  $f(\xVecVar)$ and $p$.

{\bf  The case of real-valued functions.} 
}
We first prove a lemma that shows the connection of bias to variance. 
Then, using this lemma we immediately get a $\Omega(p \cdot \Var[f(\xVecVar)])$ lower bounds for the bias achieved by  the attacker of Theorem \ref{thm:main} for the general case of real-valued functions and arbitrary $p$-covering set distribution $\SVar$. 
% See full version of paper for the proof.

\begin{lemma} \label{lem:biasGap}
Let $\alphaVar$ be any real-valued random variable over $\Supp(\alphaVar) \se [0,1]$, and $p \in [0,1]$. Let $\mu=\Ex[\alphaVar]$ be the expected value of $\alphaVar$, $\nu = \Var[\alphaVar]$ be the variance of $\alphaVar$. Then, it holds that
\begin{equation*} \label{eq:GapReal}
\mu^{-p}\cdot \Ex[\alphaVar^{1+p}]-\mu \geq \frac{p\cdot (p+1)}{2\cdot \mu^p} \cdot \nu \geq  \frac{p}{2} \cdot \nu ~.
\end{equation*}
%For the special case that $f$ is Boolean, it also holds that
%\begin{equation} \label{eq:GapBoolean}
%    \gamma \geq \frac{\ln(1/\mu)}{1-\mu} \cdot p \cdot \nu \geq p \cdot \nu ~.
%\end{equation}
\end{lemma}
\begin{proof}
We use Lemma \ref{lem:JensenGap} by letting $\vphi(x) = x^{1+p}$. Thus, we have to minimize the following function on $x \in [0,1]$,
$$g_\mu(x) = \left(x^{1+p} - \mu^{1+p} - (1+p) \cdot \mu^p \cdot (x-\mu) \right) / (x-\mu)^2 ~.$$
We now prove that the minimum happens on $x=1$. Note that the function $g_\mu(x)$ is continues on $[0,\mu)$ and $(\mu,0]$ and the limit exists at $x=\mu$ and is equal to $1/2\cdot p\cdot (1 + p)\cdot \mu^{-1 + p}$. Therefore if we show that $g'_\mu$ is negative for $x\in[0,\mu)\cup(\mu,1]$ it implies that, $\forall x\in[0,1] g(x) \geq g(1)$. We have
\begin{align*}
g'_\mu(x)&=\frac{(p - 1)\cdot \mu^{p + 1} - (p + 1)\cdot x \cdot \mu^p + (p + 1)\cdot \mu\cdot x^p - (p - 1)\cdot x^{p + 1}}{(\mu - x)^3}\\
(\text{using } c=x/\mu)~~ &= \mu^{p-2}\cdot\frac{(p - 1) - (p + 1)\cdot c + (p + 1)\cdot c^p - (p - 1)\cdot c^{p + 1}}{(1 - c)^3} ~.
\end{align*}
We prove that the numerator $q(c)=(p - 1) - (p + 1)\cdot c + (p + 1)\cdot c^p - (p - 1)\cdot c^{p + 1}$ is positive for $c>1$ and negative for $0<c<1$. For $c>0$, we have
\begin{align*}
q'(c) &= -(1+p) + (p+1)\cdot p \cdot c^{p-1} + (1-p)\cdot (p+1) \cdot c^p\\
      &= (1+p)\cdot(p \cdot c^{p-1} + (1-p)\cdot c^p -1)\\
(\text{by AM-GM inequality of Lemma \ref{lem:AM-GM}}) & \geq (1+p)\cdot(c^{p\cdot(p-1)}\cdot c^{(1-p)\cdot p} -1)\\
&= 0 ~.
\end{align*}
Therefore, $q$ is increasing for $c>0$ which implies $\forall c\in [0,1], q(c)< q(1) = 0$ and $\forall c>1, q(c)>q(1) = 0$. We have $\forall x\in[0,\mu)\cup(\mu,1], g'(x) \leq 0$.
Therefore we have
\begin{align}\label{ineq:01}
    \forall x\in[0,1], g_u(x) \geq g_u(1) ~.
\end{align}
Now we prove that $g_\mu(1) \geq \frac{p(1+p)}{2}$. Consider the following function,
$$w(\mu) = g_\mu(1) = \left(1 - \mu^{1+p} - (1+p) \cdot \mu^p \cdot (1-\mu) \right) / (1-\mu)^2~.$$
We will show that $q$ is a decreasing function for $\mu\in [0,1]$. We have
$$w'(\mu) = \frac{p\cdot(1-\mu^2)\cdot \mu^{p-1} + p^2 (1-\mu)^2\cdot \mu^{p-1} + 2 \cdot(\mu^p-1))}{(-1 + \mu)^3} ~.$$
We will show that the numerator $s(\mu) = p\cdot(1-\mu^2)\cdot \mu^{p-1} + p^2 (1-\mu)^2\cdot \mu^{p-1} + 2 \cdot(\mu^p-1)$ is negative for $\mu\in [0,1] ~.$ We have $s'(\mu)=p (p^2-1)\cdot(1-\mu)^2\cdot\mu^{p-2}$ which is negative for $\mu\in[0,1]$. This  implies that $\forall\mu\in [0,1], s(\mu) \geq s(1) = 0$. Therefore, $w$ is a decreasing function, and we obtain 
\begin{align}\label{ineq:02}\forall \mu \in [0,1], g_\mu(1)= w(\mu) \geq \lim_{u\to 1} w(u) =\frac{p(1+p)}{2} ~.\end{align}
 Now, we conclude that
\begin{align*}
        \mu^{-p} \cdot \Exp [ \alphaVar^{1+p} ] - \mu &= \mu^{-p}\left(\Exp [ \alphaVar^{1+p} ] - \mu^{1+p}\right)\\
        \text{(by Lemma \ref{lem:JensenGap})}~~~&\geq  \mu^{-p}\left(\inf_{x\in[0,1]}\set{g_\mu(x)}\cdot \nu\right) \\
        \text{(by Inequality \ref{ineq:01})}~~&\geq  \mu^{-p}\cdot g_\mu(1)\cdot \nu \\
        \text{(by Inequality \ref{ineq:02})}~~&\geq \frac{p\cdot (1+p)}{2\cdot \mu^p} \cdot \nu ~.
\end{align*}
\end{proof}
Now, using Lemma \ref{lem:biasGap} and \ref{thm:main}, we immidiately get Corollary \ref{cor:biasToVar}  

\end{document}